\theoremstyle{definition}
\crefname{assumption}{assumption}{assumptions}
\Crefname{assumption}{Assumption}{Assumptions}
\newcommand{\D}{\mathcal{D}}
\newcommand{\E}{\mathbb{E}}
\newcommand{\cL}{\mathcal{L}}
\newcommand{\T}{\mathcal{T}}
\newcommand{\tsf}[1]{\textsf{#1}}
\title{Randomization Boosts KV Caching, Learning Balances Query Load: A Joint Perspective
}
\author{Fangzhou Wu$^{1}$, Sandeep Silwal$^1$, Qiuyi (Richard) Zhang$^2$ \\
$^{1}$University of Wisconsin--Madison, $^{2}$Google DeepMind \\
\texttt{fwu89@wisc.edu, silwal@cs.wisc.edu, qiuyiz@google.com} \\
% \And
% Qiuyi (Richard) Zhang \\
% Department of Computational Neuroscience \\
% University of the Witwatersrand \\
% Joburg, South Africa \\
% \texttt{\{robot,net\}@wits.ac.za} 
}
\begin{document}

\maketitle

\begin{abstract}
    KV caching is a fundamental technique for accelerating Large Language Model (LLM) inference by reusing key-value (KV) pairs from previous queries, but its effectiveness under limited memory is highly sensitive to the eviction policy. 
    The default Least Recently Used (LRU) eviction algorithm struggles with dynamic online query arrivals, especially in multi-LLM serving scenarios, where balancing query load across workers and maximizing cache hit rate of each worker are inherently conflicting objectives.
    We give the first unified mathematical model that captures the core trade-offs between KV cache eviction and query routing.
    Our analysis reveals the theoretical limitations of existing methods and leads to principled algorithms that integrate provably competitive randomized KV cache eviction with learning-based methods to adaptively route queries with evolving patterns, thus balancing query load and cache hit rate. 
    Our theoretical results are validated by extensive experiments across 4 benchmarks and 3 prefix-sharing settings, demonstrating improvements of up to \textbf{6.92$\times$} in cache hit rate, \textbf{11.96$\times$} reduction in latency, \textbf{14.06$\times$} reduction in time-to-first-token (TTFT), and \textbf{77.4\%} increase in throughput over the state-of-the-art methods.
    Our code is available at \url{https://github.com/fzwark/KVRouting}.
\end{abstract}

\section{Introduction}
The increasing demands of Large Language Model (LLM) services impose substantial inference overhead on the LLM serving system~\citep{jaillet2025online}.
KV caching has emerged as a core technique to alleviate such costs by storing and reusing the key-value pairs of previously processed tokens as reusable prefixes for future generation~\citep{vaswani2017attention}. 
While practical, its effectiveness under limited memory is highly sensitive to the eviction policy, particularly in online processing, since the \emph{cache hit rate} of a future query depends directly on which tokens have been evicted beforehand~\citep{dan1990approximate}.
% first layer, the eviction strategy
Although the traditional Least Recently Used (LRU) eviction policy~\citep{o1993lru, fiat1991competitive} remains the dominant choice in current LLM-serving system designs~\citep{zheng2024sglangefficientexecutionstructured, kwon2023efficient, dynamo}, its strategy of evicting the oldest tokens within specific prefix-sharing cache structures is fragile and can be readily compromised under dynamic query arrivals~\citep{fiat1991competitive}.
In worst-case scenarios (Figure~\ref{fig:problem}, left), LRU may evict exactly the tokens needed by the next query, leading to cache misses and ultimately increasing inference latency.

% \begin{tcolorbox}[left=0mm, right=0mm, top=0mm, bottom=0mm, boxrule=0.5pt, breakable]
% \noindent\textbf{Challenge 1: Dynamic Query Arrivals Disrupt Cache Affinity.}
% Under tight memory, dynamic or adversarial query arrival patterns can undermine prefix reuse and disrupt cache affinity.
% Existing LRU eviction policies, being order-sensitive, are thus fragile and vulnerable to such disruptions.
% \end{tcolorbox} 

This instability of eviction becomes more pronounced in a scaled multi-LLM setting, where the objective of balancing query loads across LLMs inherently conflicts with the goal of maximizing the cache hit rate for the single model.
Such conflicts naturally extend to a general \textbf{\emph{KV cache-aware load balancing}} problem. 
Figure~\ref{fig:problem} (right) outlines its key trade-offs between \emph{caching affinity} and \emph{global load balancing}:
(i) While maximizing caching affinity by routing similar queries to the same LLM may appear ideal, it inevitably leads to severe global load imbalance.
This becomes particularly problematic when large batches of similar queries are routed to a single LLM, resulting in queueing delay dominating over actual service time and nullifying the benefits from the high cache hit rate.
(ii) Conversely, distributing queries to balance query load without considering the cache situation leads to poor cache hit rate and, ultimately, suboptimal end-to-end inference latency.
Furthermore, routing queries across LLMs introduces dependencies within the online sequence of queries, as actions taken in earlier steps have a sizeable impact on those taken later, creating additional variability, and further destabilizing existing LRU eviction policies.
These complex tensions underscore the need for a formal mathematical model capturing the underlying dynamics.

% \begin{tcolorbox}[left=0mm, right=0mm, top=0mm, bottom=0mm, boxrule=0.5pt, breakable]
% \noindent\textbf{Challenge 2: Inherent Trade-offs in KV Cache-Aware Load Balancing.}
% In multi-LLM scenarios, KV cache-aware load balancing presents fundamental trade-offs between \textbf{\emph{local caching affinity}} and \textbf{\emph{global load balancing}}.
% Understanding and optimizing these competing objectives requires a unified perspective, grounded in formal modeling and analysis, to capture their underlying dynamics.
% \end{tcolorbox} 

\begin{figure}[t]
    \centering
    \includegraphics[width=\linewidth]{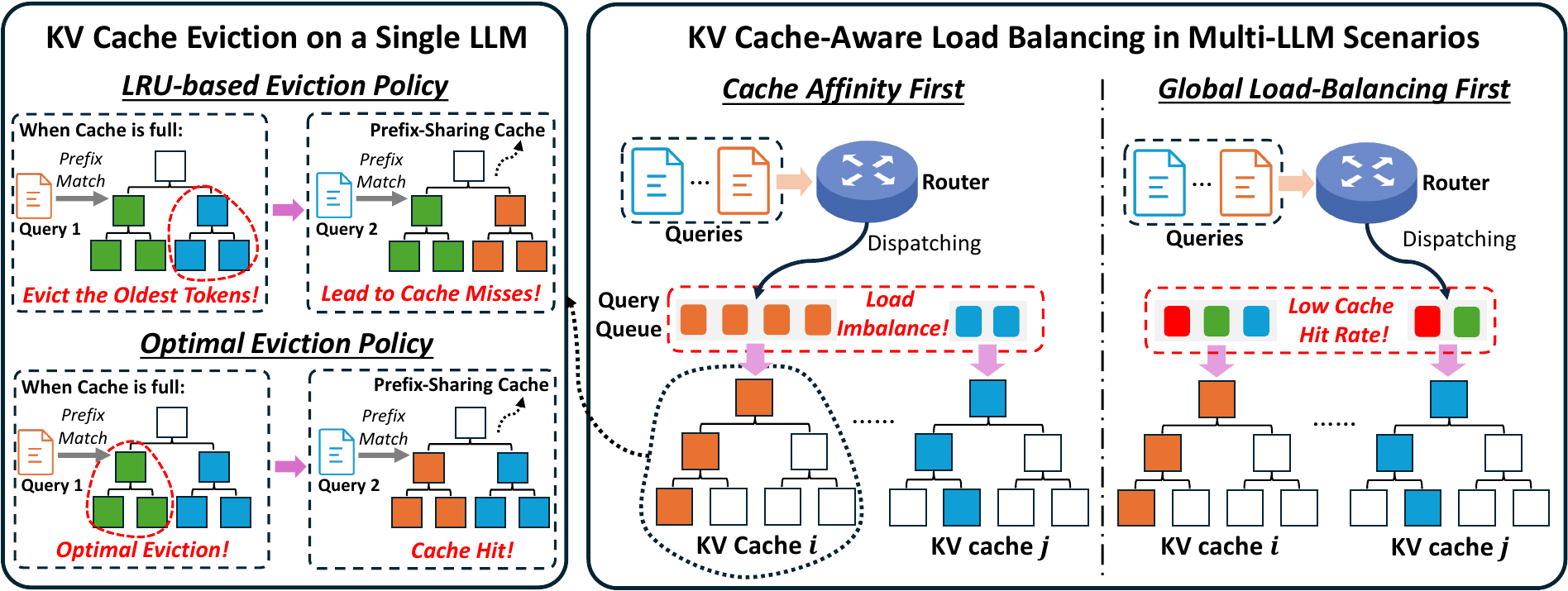}
     \caption{Key trade-offs between {single} LLM cache affinity and {global} load balancing in the KV cache-aware load balancing problem.}
    \label{fig:problem}
    % \vspace{-15pt}
\end{figure}

% current gap 
% \noindent\textbf{Bottlenceks of Existing Work.} 
Existing work, however, primarily relies on heuristics to address this problem, where they either adopt a static linear score that ranks each LLM based on cache hit rate and queue length, or employ a rule-based strategy that switches between highest-hit-rate and least-loaded routing using a predefined load-balance threshold~\citep{dynamo, llmd, zheng2024sglangefficientexecutionstructured}.
While easy to implement, these methods suffer from key limitations and struggle to optimize both objectives.
They are inherently static and unable to adapt to dynamic query arrival patterns.
Their modeling of queue workload is limited to the raw counts of pending queries, which is overly simplistic and fails to capture the true dynamics of system congestion.
They are also fundamentally limited in achieving optimal performance, as they are rooted in heuristics, lack formal modeling, and therefore fail to capture the intricate coupling between cache eviction and load balancing.

% our contribution: formal model & analysis, methods, results
This work establishes the first unified model of KV cache-aware load balancing to fill the gap between practical designs and theoretical understanding. 
The model formalizes the end-to-end latency of a query on each LLM by decomposing it into service time and queuing delay, explicitly modeling service time as a function of cache state and defining the queue load of each LLM as the cumulative service time of its assigned queries, with the objective of minimizing the makespan across LLMs.
Within this formulation, our formal analysis reveals the inherent limitations of the LRU-based eviction policies, whose competitive ratio degrades to $O(n)$ in the worst case.

These insights further motivate the proposition of two principled algorithms, \tsf{RLT} and \tsf{LBGR}. 
For each LLM, \tsf{RLT} introduces \emph{Randomized Leaf Token} eviction to replace traditional LRU, improving robustness to dynamic query arrival patterns and achieving a worst-case competitive ratio of $O(\log n)$. 
\tsf{LBGR} then routes queries greedily by estimating the end-to-end latency for each LLM using a dynamic, online learning-based strategy.
In particular, it estimates the service time and queue load using a global cache tracker and applies exponential decay to the queue load estimate to account for its natural reduction over time.
To capture residual latency beyond service and queueing delays, an online residual regression model is deployed and continuously updated during runtime.

% We validate the effectiveness of our algorithms through extensive experiments across 4 benchmarks and 3 distinct prefix-sharing settings,  demonstrating improvements of up to \textbf{6.92$\times$} in cache hit rate, \textbf{11.96$\times$} reduction in latency, \textbf{14.06$\times$} reduction in time-to-first-token (TTFT), and \textbf{77.4\%} increase in throughput compared with the state-of-the-art methods. To summarize, our main contributions are:
We validate the effectiveness of our algorithms through extensive experiments across 4 benchmarks and 3 distinct prefix-sharing settings,  
{
% demonstrating improvements of up to \textbf{6.92$\times$} in cache hit rate, \textbf{11.96$\times$} reduction in latency, \textbf{14.06$\times$} reduction in time-to-first-token (TTFT), and \textbf{77.4\%} increase in throughput compared with the state-of-the-art methods. 
demonstrating up to \textbf{11.96$\times$} and \textbf{5.46$\times$} reductions in end-to-end latency, and up to \textbf{14.06$\times$} and \textbf{7.19$\times$} reductions in time-to-first-token (TTFT) compared to state-of-the-art methods on Llama-3.1-8B-Instruct (NVIDIA L40) and Llama-3.1-70B-Instruct (NVIDIA H200), respectively, with context lengths up to 8k tokens.
Furthermore, on Llama-3.1-8B-Instruct, \tsf{RLT} achieves up to a \textbf{6.92$\times$} higher cache hit rate and \textbf{77.4\%} higher throughput than the LRU-based eviction policy under worst-case query arrivals.
}
To summarize, our main contributions are:
\begin{enumerate}[leftmargin=*, noitemsep]
    \item We present the first unified formal model for the KV cache-aware load balancing problem, which captures the intricate coupling between local cache eviction and global load balancing.
    \item Within this model, we provide a formal analysis that reveals the theoretical limitations of existing system designs and motivates two simple yet effective algorithms, \tsf{RLT} and \tsf{LBGR}.
    \item \tsf{RLT} introduces Randomized Leaf Token eviction in prefix-sharing KV cache structure, improving robustness to dynamic query arrivals and achieving a worst-case competitive ratio of $O(\log n)$, which is exponentially better than the $O(n)$ achieved by the existing LRU-based eviction policy.
    \item \tsf{LBGR} employs an online regression model that incorporates cache state to dynamically estimate end-to-end latency and greedily route queries to the LLM with the lowest predicted latency, providing greater adaptability to dynamic query patterns than prior static heuristic-based methods.
    \item We conduct extensive evaluations on 4 benchmarks across 3 prefix-sharing settings, which validate the effectiveness of the proposed algorithms, showing notable reductions in end-to-end latency.
\end{enumerate}

\section{Background}
\noindent\textbf{KV Cache Management.}
Recent work on KV cache management falls into two strands.
Context-aware methods leverage model-driven signals to compress the KV cache, retaining only a fixed-budget subset~\citep{zhang2023h2o, xiao2023efficient, liu2023scissorhandsexploitingpersistenceimportance}.
Examples include H2O, which preserves high-attention tokens, and StreamingLLM, which retains initial attention sinks~\citep{zhang2023h2o, xiao2023efficient, liu2023scissorhandsexploitingpersistenceimportance}. 
Another strand aims to improve KV cache efficiency via memory layout and interface design~\citep{kwon2023efficient, zheng2024sglangefficientexecutionstructured}.
vLLM reduces fragmentation with fixed-size virtualized memory pages~\citep{kwon2023efficient}, while RadixAttention enables prefix sharing and reuse via a radix tree~\citep{zheng2024sglangefficientexecutionstructured}.
However, those systems rely on LRU-based KV cache eviction, which is fragile under dynamic or adversarial query arrivals. 
The lack of formal analysis further reveals a gap between practical designs and theoretical understanding.

\noindent\textbf{KV Cache-Aware Load Balancing.}
In multi-LLM serving, KV cache-aware load balancing arises from the need to balance queue load while preserving KV reuse~\citep{sun2024llumnix, zheng2024sglangefficientexecutionstructured, lee2024infinigen}. 
Most systems use heuristics that trade off cache affinity and queue load~\citep{dynamo, zheng2024sglangefficientexecutionstructured, llmd}. 
For instance, SGLang switches between hit-rate and load-based routing using a fixed threshold~\citep{zheng2024sglangefficientexecutionstructured}, while others use static linear scores~\citep{dynamo, llmd}. 
While practical, these methods are largely heuristic, lack formal modeling, and may underperform under dynamic query arrivals.
See extended related work in~\Cref{app:related}.

\section{Formal Problem Formulation and Theoretical Analysis}
While modeling all aspects of the KV cache-aware load balancing is intractable, its essential properties can be effectively captured through a simplified {
% Mixed-Integer Programming (MIP) 
formulation that minimizes the makespan across all LLMs during serving}.

\subsection{Unified Online Formulation for KV Cache-Aware Load Balancing}

Let $M$ denote the number of workers (LLMs)~\footnote{Throughout, we use ``LLM'' and ``worker'' interchangeably and assume all deployed LLMs are of the same type.}, indexed by $i \in [M]$, each denoted by $m_i$ and equipped with a KV cache of size $B_i$ tokens.
We are given a sequence of $N$ queries, $Q = \{q_j\}_{j=1}^N$, which are processed in a fixed order (online arrival).
Each query $q_j$ has an input length of $|q_j|$ tokens, and its response $a_j$ consists of $|a_j|$ tokens.
We use two variables to track the state of each worker $m_i$ after processing $j$ queries: 
(1) $P_i^{(j)}$, which denotes the total accumulated load on worker $m_i$, and (2) $S_i^{(j)}$, which represents the cache state of worker $m_i$.
The system is initialized at $j=0$ with $P_i^{(0)} = 0$ and $S_i^{(0)} = \varnothing$ for all $i \in [M]$. For any query $q_j$, we define $h_{ij} = h(S_i^{(j-1)}, q_j)$ as the number of cache-hit tokens on $m_i$.
Let $\alpha_{\text{CACHED}}$ and $\alpha_{\text{MISS}}$ denote the time costs of processing cached and uncached tokens, respectively.
Then, the service time cost for $q_j$ on $m_i$ can be defined as 
\begin{equation}
    Cost_{ij} =  \alpha_{\text{CACHED}} \cdot h_{ij} + \alpha_{\text{MISS}}\cdot (|q_j| - h_{ij}) + O_{ij}
\end{equation}
where $O_{ij}$ is the time cost of generating the output $a_j$. Let $x_{ij} \in \{0, 1\}$ be the binary decision variable indicating whether query $q_j$ is routed to worker $m_i$, subject to $\sum_{i=1}^M x_{ij} = 1, \forall j \in [N]$.
Once the assignment $x_{ij}$ is made for $q_j$, the system state is updated for all workers accordingly.
The queue load of each worker is incremented by the service time cost only if it was assigned the query:
\begin{equation}
    P_i^{(j)} =  P_i^{(j-1)} + x_{ij}\cdot Cost_{ij}.
\end{equation}

The cache state is updated only for the worker that processes the query:
\begin{equation}
    S_i^{(j)} = 
    \begin{cases}
        \textsc{UpdateCache}(S_i^{(j-1)}, q_j, B_i) & \text{if } x_{ij} = 1 \\
        S_i^{(j-1)} & \text{if } x_{ij} = 0
    \end{cases}
\end{equation}
where $\textsc{UpdateCache}(\cdot)$ is a function (e.g., an LRU policy) that returns the new cache state of $m_i$.

The objective is to find the assignment $x = \{x_{ij}\}$ that minimizes the following \emph{makespan}: $ \min_{x} \left( \max_{i \in [M] } \{ P_i^{({N})} \} \right)$.

\begin{table}[t]
\centering
\caption{{Notation and definitions.}}
{
\label{tab:notation}
\small
\begin{tabular}{ll}
\toprule
\textbf{Notation} & \textbf{Description} \\
\midrule
$M$ & Number of workers (LLMs) \\
$m_i$ & The $i$-th worker (LLM) \\
$B_i$ & KV cache size (in tokens) of worker $m_i$ \\
$Q$ & Queries processed in a fixed order (online arrival) \\
$q_j$ & Input token sequence of $j$-th query \\
$a_j$ & Output token sequence of $j$-th query \\
$P_i^{(j)}$ & Query load of worker $m_i$ after processing $j$ queries \\
$S_i^{(j)}$ & Cache state of worker $m_i$ after processing $j$ queries \\
$h_{ij}$ & Number of cache-hit tokens of $q_j$ on worker $m_i$ \\
$\alpha_{\text{CACHED}}$ & Per-token time cost for cached tokens \\
$\alpha_{\text{MISS}}$ & Per-token time cost for uncached tokens \\
${Cost}_{ij}$ & Service time cost of processing $q_j$ on worker $m_i$ \\
$x_{ij}$ & Binary routing variable, equal to $1$ if $q_j$ is assigned to worker $m_i$ \\
\bottomrule
\end{tabular}
}
\end{table}

\subsection{Theoretical Limitations of LRU-based Eviction}\label{sec:vanilla}
\noindent\textbf{\tsf{Leaf-LRU}.} 
Building on our unified model above, we formally analyze the performance of LRU-based eviction policies, focusing on \tsf{Leaf-LRU} \tsf{(L-LRU)}, the eviction algorithm used in SGLang, one of the state-of-the-art LLM serving systems.
In SGLang, the RadixAttention mechanism stores KV values in a radix tree structure organized at token granularity, and evicts the least recently used leaf tokens using \tsf{L-LRU} when memory is saturated.
Since the process time of token hits is negligible compared to that of misses, we ignore the hit cost and focus on the total number of token misses. 

\noindent\textbf{Cache Matching and Arrival Model.}
As the response $a_j$ of each query $q_j$ is also cached in the radix tree, we define the \emph{complete token path} $\Gamma_j := q_j \| a_j$ as the concatenation of the prompt and its generated tokens.
Accordingly, we extend the query set $Q$ to the set of complete token paths $\widetilde{Q} := \{\Gamma_j\}_{j=1}^{N}$.
Cache matching then reduces to longest-prefix path matching in the radix tree.
In the following analysis, we assume an arbitrary (potentially adversarial) arrival order over $\widetilde{Q}$.

\noindent\textbf{Competitive Ratio of Leaf-LRU in RadixAttention.}
We compare \tsf{L-LRU} against the optimal eviction strategy (\tsf{OPT}), which evicts the leaf token whose next appear lies furthest in the future. 
Our analysis covers both single-query and batch processing settings.
Following~\citep{fiat1991competitive}, we partition $\widetilde{Q}$ into disjoint phases $\{\P_v\}$, where each phase $\P_v$ (except possibly the last) contains exactly $B_i$ distinct tokens.
In any phase $\P_v$ ($v \geq 2$), a token is defined as \emph{\textbf{clean}} for \tsf{L-LRU} if it was not in the cache at the end of $\P_{v-1}$ and has not yet appeared in $\P_v$; \emph{\textbf{new}} if it appears for the first time in $\P_v$; and \emph{\textbf{old}} if it has already been seen earlier in $\P_v$.
Inspired by~\citep{fiat1991competitive}, we begin by establishing a lower bound on the number of misses incurred by \tsf{OPT} during a phase in the following~\Cref{lemma:1}~\footnote{All proofs are provided in~\Cref{app:proofs}.}.
\begin{restatable}{lemma}{lemmaA}
\label{lemma:1}
In RadixAttention, the amortized number of misses incurred by \tsf{OPT} in a phase $\P_v$ with $c$ clean tokens is lower bounded by $\max\{c/2, 1\}$.
% , where $c$ denotes the number of clean tokens in $\P_v$.
\end{restatable}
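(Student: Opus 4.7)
The plan is to adapt the amortized paging analysis of~\citep{fiat1991competitive} to the radix-tree structure of RadixAttention, bounding \tsf{OPT}'s miss count over pairs of consecutive phases and then charging the result back to single phases. Fix two consecutive phases $\P_{v-1}$ and $\P_v$ with their distinct token sets $T_{v-1}$ and $T_v$, each of size $B_i$. The first step would be to argue that \tsf{L-LRU}'s cache at the end of $\P_{v-1}$ sits inside $T_{v-1}$. Here the radix-tree subtlety enters: because only leaves are evictable, an internal node with cached descendants is sticky; I would resolve this by noting that the path-closure convention makes every ancestor of an accessed token itself count as an accessed token of that phase, so sticky ancestors still live in $T_{v-1}$. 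Combined with cache saturation $|T_{v-1}| = B_i$, this containment implies that the clean-token set of $\P_v$ contains $T_v \setminus T_{v-1}$ and therefore has size at least $c$.

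Next would come the pigeonhole step: the total number of distinct tokens accessed across $\P_{v-1} \cup \P_v$ is at least $B_i + c$, while \tsf{OPT}'s cache capacity is only $B_i$. Hence \tsf{OPT} must fetch at least $c$ tokens that were not in its cache at the start of $\P_{v-1}$, incurring at least $c$ misses over the two phases. Splitting these equally between $\P_{v-1}$ and $\P_v$ yields the amortized bound of $c/2$ per phase.

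For the $\max\{c/2,1\}$ floor, I would introduce an additional potential-based charging: whenever $c \geq 1$, \tsf{OPT} cannot perfectly shadow \tsf{L-LRU}'s cache for free, so it incurs at least one miss on some clean token that can be credited to $\P_v$; alternatively, the $B_i$ compulsory misses \tsf{OPT} pays in $\P_1$ (starting from an empty cache) serve as a reservoir that can be redistributed to give each subsequent phase a floor of one miss whenever $c/2 < 1$. The main technical obstacle is the very first step: pinning down the containment of \tsf{L-LRU}'s end-of-phase cache inside $T_{v-1}$ under leaf-only eviction. This containment, immediate for classical LRU, requires a careful case analysis in the radix setting to rule out internal nodes cached before $\P_{v-1}$ that remain sticky throughout it, and is the step that most clearly distinguishes RadixAttention from the classical paging template; once it is in place, the rest of the argument is a routine translation of the Fiat et al.\ framework.
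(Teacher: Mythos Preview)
Your plan hinges on the containment $C_{v-1}\subseteq T_{v-1}$ (and hence $C_{v-1}=T_{v-1}$, since both have size $B_i$), from which you derive $c=|T_v\setminus T_{v-1}|$ and then run the two-phase pigeonhole.  This containment is \emph{false} in RadixAttention, and your proposed ``path-closure'' resolution does not save it.  Phases are delineated at token granularity and can split a single path $\Gamma$: its prefix tokens land in $\P_{v-2}$ while its suffix tokens land in $\P_{v-1}$.  The prefix tokens are then internal nodes with cached descendants throughout $\P_{v-1}$, so \tsf{L-LRU} can never evict them, yet they were never accessed during $\P_{v-1}$ and hence do not belong to $T_{v-1}$.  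These are precisely the ``ghost'' tokens the paper's proof isolates.  You flag this as the main obstacle, but you then assert it can be ``ruled out by careful case analysis''; it cannot---ghost tokens genuinely sit in $C_{v-1}\setminus T_{v-1}$, and once $C_{v-1}\neq T_{v-1}$ the identity $c=|T_v\setminus T_{v-1}|$ breaks, as does the bound $|T_{v-1}\cup T_v|\ge B_i+c$.

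The paper does not attempt to establish this containment at all.  Instead it runs the full Fiat--Karp--Luby potential: let $d$ (resp.\ $e$) be the number of tokens in \tsf{OPT}'s cache but not in \tsf{L-LRU}'s cache at the start (resp.\ end) of $\P_v$.  One inequality, $\textsf{OPT}\text{-misses}\ge c-d$, is immediate.  The second, $\textsf{OPT}\text{-misses}\ge e$, is where the radix-specific work lives: the paper classifies the $e$ tokens according to whether they are ghost tokens of the boundary path, evicted suffix tokens of that path, or tokens never in \tsf{L-LRU}'s cache, and shows in each of three cases that \tsf{OPT} pays at least $e$.  Telescoping $\max\{c-d,e\}\ge\tfrac12(c-d+e)$ over phases then gives the amortized $c/2$.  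So the ghost tokens are not eliminated---they are \emph{accounted for} through $d$ and $e$, and the three-case analysis on the boundary path is the genuine new ingredient beyond the classical template.

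Your floor-of-$1$ argument is also not sound: the $B_i$ compulsory misses in $\P_1$ cannot be redistributed to arbitrarily many later phases with $c_v<2$.  The paper obtains the floor in the standard way---each phase together with the first token of the next phase spans $B_i+1$ distinct tokens, forcing at least one \tsf{OPT} miss chargeable to that phase.
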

We then analyze the maximum number of misses that \tsf{L-LRU} can incur in a phase $\P_v$ with $c$ clean tokens. 
Misses can occur in only two cases: (1) when a new token appears, or (2) when a previously evicted old token reappears. 
We first derive an upper bound on the number of misses assuming that case (2) does not occur~(\Cref{lemma:2}), and then show that case (2) indeed never arises~(\Cref{lemma:3}).
\begin{restatable}{lemma}{lemmaB}
\label{lemma:2}
    For any phase $\P_v$ with $c$ clean tokens, the number of misses incurred by \tsf{L-LRU} under single-query processing is at most $B_i - \cL + c$, assuming no old token reappears after eviction.
\end{restatable}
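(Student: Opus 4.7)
My plan is to bound the \tsf{L-LRU} misses in phase $\P_v$ by decomposing them into two disjoint sources and bounding each separately, then summing.

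First, I would classify every miss that occurs during $\P_v$ into one of two types. Type (A) is the first in-phase access of a clean token: by definition, such a token is absent from the cache at the start of $\P_v$, so this access is necessarily a miss, contributing exactly $c$ misses in total. Type (B) is an access to a token that was in the cache at the start of $\P_v$ but was evicted by \tsf{L-LRU} before this access, i.e., its first in-phase access now misses. The hypothesis that no old token reappears after eviction rules out a third scenario --- an already-accessed-in-phase token being evicted and subsequently re-accessed --- so every miss in $\P_v$ must be of type (A) or type (B).

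Next, I would bound the number of type (B) misses using the structural constraint that \tsf{L-LRU} may only evict current leaves of the radix tree. Writing $\cL$ for the initial leaf count established in the preceding setup, the $B_i - \cL$ internal tokens of the starting cache cannot be evicted directly and can only be removed after all of their descendants have themselves been evicted. I would then argue that at most $B_i - \cL$ cached-at-start tokens can be evicted before being accessed in $\P_v$: each type (B) miss is charged to a leaf eviction (either an original leaf, or an internal node promoted to a leaf through chain eviction), and the total number of such evictions landing on tokens still due to appear in $\P_v$ is controlled by the initial tree topology together with the no-reappearance hypothesis (which ensures already-used slots are not re-targeted). Combining this with the type (A) bound yields $c + (B_i - \cL) = B_i - \cL + c$.

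The main obstacle is formalizing the chain-eviction bound crisply. Because evicting a leaf can promote a previously internal token into a new leaf, repeated \tsf{L-LRU} evictions may cascade upward through the radix tree, so the "protection" enjoyed by internal tokens is only temporary. A careful charging or potential-function argument over the tree is needed to show that the number of cached-at-start tokens that can be evicted and then re-accessed within $\P_v$ does not exceed $B_i - \cL$. This refines the classical phase-based \tsf{LRU} analysis of Fiat et al.\ by explicitly accounting for the leaf-only eviction rule of RadixAttention and leveraging the no-reappearance assumption to forbid the cascading re-miss patterns that would otherwise push the bound above $B_i - \cL + c$.
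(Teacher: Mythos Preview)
Your proposal rests on a misreading of $\cL$. In this paper $\cL$ denotes the \emph{minimum length over all complete token paths} $\Gamma_j\in\widetilde{Q}$ (see the statement of Theorem~\ref{thm:1}), not the number of leaf tokens in the cached radix tree at the start of the phase. With the correct definition, the quantity $B_i-\cL$ has no interpretation as ``number of internal (non-leaf) cached tokens,'' so your leaf/internal decomposition and the chain-eviction charging argument you sketch do not target the bound that is actually being claimed. Even if your leaf-count argument could be completed, it would prove a bound in a different parameter and would not establish the lemma as stated.

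The paper's argument uses $\cL$ in an entirely different way, as a lower bound on how many tokens a single query touches. It first disposes of the case $c\ge\cL$ trivially, since then $B_i-\cL+c\ge B_i$ and a phase contains at most $B_i$ distinct tokens. For $c<\cL$, it looks at the \emph{first query path} $\Gamma_j$ processed after the phase's initial run of clean tokens. Because $|\Gamma_j|\ge\cL$ and at most $c$ of its tokens can be absent from the cache (either they are themselves clean, or they were evicted to make room for the earlier clean tokens), at least $\cL-c$ of the accesses along $\Gamma_j$ are guaranteed hits. Those $\cL-c$ hits are then subtracted from the $B_i$ distinct-token budget of the phase to obtain the $B_i-\cL+c$ bound (with a separate but similar counting argument when $\Gamma_j$ is not fully contained in $\P_v$). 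The structural fact being exploited is that single-query processing walks tokens in contiguous root-to-leaf blocks of length at least $\cL$, not that internal radix-tree nodes are temporarily shielded from eviction.
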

\begin{restatable}{lemma}{lemmaC}
\label{lemma:3}
    Under single-query processing in \tsf{L-LRU}, no old token reappears in any $\P_v$ after eviction.
\end{restatable}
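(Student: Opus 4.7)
The plan is to prove the lemma by contradiction. I would assume that some token $t$ is accessed in $\P_v$, later evicted within $\P_v$, and then accessed again in $\P_v$, focus on the moment $\tau_e$ at which $t$ is evicted, and derive a contradiction with the phase-size definition of $\P_v$.

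The first ingredient is two structural properties of the radix-tree cache under single-query \tsf{L-LRU}. The cache is always prefix-closed, because \tsf{L-LRU} only evicts leaves and hence an internal node can never be removed while any of its descendants remains in the cache. Under single-query processing, a query access traverses the entire path from the root down to its target token and refreshes the recency of every node on it; combined with prefix-closure, this shows that whenever a cached token $w$ was accessed in $\P_v$, every ancestor of $w$ currently in the cache was also accessed in $\P_v$. Equivalently, any non-$\P_v$ token in the cache has only non-$\P_v$ descendants in the cache, so its presence forces the existence of at least one non-$\P_v$ cache leaf, since any finite nonempty subtree contains a leaf.

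Next I would analyze the eviction of $t$. Because \tsf{L-LRU} evicts the least recently used leaf, $t$ must itself be a leaf at $\tau_e$ and every other leaf in the cache must be strictly more recent than $t$. But $t$ is a $\P_v$-accessed token and every non-$\P_v$ token was last touched before $\P_v$ began, so every other leaf is also $\P_v$-accessed. By the structural observation above, this rules out the existence of any non-$\P_v$ token in the cache at $\tau_e$, forcing all $B_i$ slots to hold distinct $\P_v$-accessed tokens. The eviction is triggered by the miss insertion of some token $s$, which by assumption is not in the cache; if the access of $s$ lay in $\P_v$, then $s$ would contribute a $(B_i + 1)$-th distinct token to $\P_v$, contradicting the defining property that $\P_v$ contains exactly $B_i$ distinct tokens. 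Hence the access of $s$, and therefore the eviction it triggers, actually lies in $\P_{v+1}$, contradicting the assumption that $t$ was evicted inside $\P_v$. This shows that no token accessed in $\P_v$ is ever evicted within $\P_v$, so in particular no old token can reappear after eviction.

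The step I expect to be the main obstacle is the structural claim that the $\P_v$-accessed tokens in the cache form a prefix-closed subset. This is precisely where the single-query assumption is essential: under batched or interleaved execution, the path traversal and the per-node recency update may be decoupled across concurrent queries, so an ancestor of a $\P_v$-accessed node may fail to be classified as $\P_v$-accessed, breaking the leaf-existence argument that drives the contradiction. Once that structural property is in hand, the remainder of the proof is a short LRU-recency comparison combined with the phase-size counting bound.
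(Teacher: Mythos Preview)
Your high-level strategy (count distinct $\P_v$-accessed tokens and collide with the phase-size bound) matches the paper's, but the structural claim you rely on fails precisely at the phase boundary, and this breaks the argument. Concretely, the first query processed in $\P_v$ may straddle the boundary between $\P_{v-1}$ and $\P_v$: its prefix tokens are accessed in $\P_{v-1}$ while its suffix tokens are accessed in $\P_v$. If $t$ lies in that suffix, then the prefix tokens are ancestors of $t$ that sit in the cache at $\tau_e$ yet were \emph{never} accessed in $\P_v$. So a $\P_v$-accessed token can have non-$\P_v$-accessed ancestors in the cache, contradicting your prefix-closure claim; equivalently, a non-$\P_v$ token (such an ancestor) can have a $\P_v$-accessed leaf descendant (namely $t$, or another leaf below it). Your inference ``traversal to $w$ refreshes all ancestors'' is true for recency but not for phase membership: the ancestor accesses during that traversal may belong to $\P_{v-1}$.

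Once that claim fails, so does your conclusion that all $B_i$ cache slots at $\tau_e$ are $\P_v$-accessed; in fact only $B_i-(y-z)$ of them are, where $y$ is the number of ancestors of $t$ and $z$ is the number of those already seen in $\P_v$. The insertion of $s$ then need not produce a $(B_i{+}1)$-th distinct token, so there is no contradiction at the eviction moment. The paper closes the gap by shifting attention from $\tau_e$ to the \emph{re-access} of $t$: re-reaching $t$ forces traversal through all $y$ ancestors, contributing $y-z$ additional distinct $\P_v$-tokens, and together with the $B_i-y+z$ already present this exhausts the $B_i$-token budget of $\P_v$ before $t$ can reappear. You correctly flagged the structural step as the obstacle, but the failure mode is the phase-boundary split in single-query mode, not only batching.
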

Using~\Cref{lemma:1,lemma:2,lemma:3}, we establish the competitive ratio of \tsf{L-LRU} under single-query processing.
\begin{restatable}[Single-Query]{theorem}{thmA}
\label{thm:1}
    Under single-query processing setting, the competitive ratio of \tsf{L-LRU} in RadixAttention on worker $m_i$ with cache capacity $B_i$ is upper bounded by $(B_i - \cL + 2)$ and lower bounded by $(B_i - \cL + 1)$, where $\cL$ denotes the minimum length over all $\Gamma_j \in \widetilde{Q}$.
\end{restatable}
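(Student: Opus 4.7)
The plan is to establish the upper bound via a phase-based amortized argument combining \Cref{lemma:1,lemma:2,lemma:3}, and to establish the lower bound through a cyclic adversarial construction adapted to the radix-tree cache structure.

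For the upper bound, I would partition the sequence of token accesses into phases $\{\P_v\}_{v=1}^{V}$ as already defined, where each phase contains exactly $B_i$ distinct tokens, and let $c_v$ denote the number of clean tokens in $\P_v$. \Cref{lemma:3} guarantees that no old token reappears after eviction under \tsf{L-LRU}, which discharges the hypothesis of \Cref{lemma:2} and yields at most $B_i - \cL + c_v$ \tsf{L-LRU} misses in $\P_v$; \Cref{lemma:1} simultaneously lower bounds \tsf{OPT}'s amortized misses per phase by $\max\{c_v/2,\, 1\}$. Writing $C := \sum_v c_v$, summing gives $\text{L-LRU} \leq V(B_i - \cL) + C$ and $\text{OPT} \geq \max\{C/2,\, V\}$. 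A case split then finishes the argument: if $C \leq 2V$, then $\text{OPT} \geq V$ and the ratio is at most $(B_i - \cL) + C/V \leq B_i - \cL + 2$; if $C > 2V$, then $\text{OPT} \geq C/2$ and the ratio is at most $2 + 2V(B_i - \cL)/C < B_i - \cL + 2$. Either way the upper bound holds.

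For the lower bound, I would construct a radix tree consisting of a single chain prefix of $\cL - 1$ tokens followed by $B_i - \cL + 2$ distinct leaf tokens branching off the last prefix node, define queries $\Gamma_1, \dots, \Gamma_{B_i - \cL + 2}$ each of length exactly $\cL$ traversing the prefix to a distinct leaf, and feed them cyclically to the scheduler. Since the $\cL - 1$ prefix tokens always retain a cached child, they remain internal nodes and \tsf{L-LRU} can only evict leaf tokens, leaving $B_i - \cL + 1$ leaf slots for the $B_i - \cL + 2$ distinct leaves. After warmup, \tsf{L-LRU} evicts exactly the leaf about to be queried next, causing one leaf miss per query. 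In contrast, \tsf{OPT} applies farthest-next-use on the leaf set, which reduces to Belady's optimum with cache size $B_i - \cL + 1$ on $B_i - \cL + 2$ cyclically accessed items and incurs exactly one miss every $B_i - \cL + 1$ queries in steady state. Counting over $T$ cycles yields a competitive ratio approaching $B_i - \cL + 1$ as $T \to \infty$, matching the claimed lower bound.

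The main technical obstacle lies in the lower bound: one must legitimately reduce the tree-structured RadixAttention problem to classical paging on the leaf set. This requires verifying that (i) internal prefix tokens remain ineligible for eviction under both \tsf{L-LRU} and the offline \tsf{OPT} throughout the cyclic schedule, and (ii) \tsf{OPT} cannot benefit from any non-Belady choice such as temporarily evicting a prefix token and reloading it, so that its steady-state miss rate is exactly $1/(B_i - \cL + 1)$. Once these structural facts are pinned down, the standard cyclic-schedule counting argument delivers the matching lower bound, and the remaining additive gap of $1$ between the two bounds is a typical artifact of the phase-boundary accounting in the upper bound.
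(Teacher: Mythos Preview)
Your proposal is correct and follows essentially the same route as the paper: the upper bound combines \Cref{lemma:1,lemma:2,lemma:3} via phase analysis, and the lower bound is the same cyclic construction over $B_i-\cL+2$ paths that share an $(\cL-1)$-token prefix and differ in a single leaf. Two minor remarks: (a) your summed case-split on $C \lessgtr 2V$ is equivalent to the paper's per-phase bound $\max_c (B_i-\cL+c)/\max\{c/2,1\}$, and is arguably cleaner given that \Cref{lemma:1} is stated as an \emph{amortized} lower bound on \tsf{OPT}; (b) your obstacle (ii) is unnecessary in the paper's framing, because \tsf{OPT} is \emph{defined} there as the leaf-restricted Belady rule rather than an unconstrained offline optimum, so only your point (i)---that prefix tokens remain internal throughout the cyclic schedule---is actually needed, and it follows since at least one leaf is always cached.
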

We next consider a continuous batching setting, where the system handles $\beta$ distinct queries concurrently during processing.
{
We assume $\beta \cL_{max} \leq B_i$ to ensure that the cache can store the KV entries of all $\beta$ in-progress queries, each with length up to $\cL_{max}$, without evicting any entries belonging to those queries. 
This capacity condition aligns with typical real-world configurations and is precisely the regime where the choice of eviction policy has a significant impact.
Furthermore, we assume the queries within each batch are distinct, which is essential for obtaining a meaningful worst-case analysis in the batched setting (see~\Cref{app:proofs} for a detailed discussion).
}
Building on the previous analysis, we present the competitive ratio of \tsf{L-LRU} under this setting in the following~\Cref{thm:2}.
\begin{restatable}[Batch]{theorem}{thmB}
\label{thm:2}
    Consider the continuous batch setting with batch size $\beta$.
    Let $\cL_{max}$ and $\cL$ denote the maximum and minimum lengths over all $\Gamma_j \in \widetilde{Q}$.
    If $\beta \cL_{max} \leq B_i$, where $B_i$ is the cache capacity of worker $m_i$, and all queries in a batch are distinct,
    then, the competitive ratio of \tsf{L-LRU} in RadixAttention is upper bounded by $(B_i-\cL - \beta + 3)$ and lower bounded by $(B_i-\cL - \beta + 2)$.
\end{restatable}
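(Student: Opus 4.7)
The plan is to reduce the batched case to a modified version of the single-query analysis where the effective cache capacity available to \tsf{L-LRU} is diminished by the presence of the other $\beta - 1$ in-flight queries. Under continuous batching, at any moment up to $\beta$ distinct complete token paths are in progress, and \tsf{L-LRU} cannot evict any token that currently lies on an active query's path. Because the $\beta$ queries in a batch are required to be distinct, each of the $\beta - 1$ queries other than the ``current'' one being analyzed pins at least one cache token that is not on the current query's path, so the effective budget available for eviction against the current query is at most $B_i - (\beta - 1)$ tokens. The capacity condition $\beta \cL_{\max} \leq B_i$ ensures this effective budget is nonnegative and that no pinned token is ever forced out by capacity pressure, so the usual leaf-LRU discipline still governs which tokens are removed.

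Second, I would reprise the phase decomposition of~\Cref{sec:vanilla}, but redefine each phase $\P_v$ to contain exactly $B_i - \beta + 1$ distinct tokens instead of $B_i$. The analog of~\Cref{lemma:3} carries over by the same argument: any old token within $\P_v$ was touched earlier in the phase and therefore sits higher in the LRU order than any clean candidate, so under the reduced effective budget it cannot be ejected before the phase ends. Consequently the analog of~\Cref{lemma:2} upgrades to a miss bound of $(B_i - \beta + 1) - \cL + c$ in a phase with $c$ clean tokens, where the $-\cL$ slack still reflects the shortest complete token path whose suffix never needs to be fetched after first materialization and the $-\beta + 1$ correction absorbs the pinned tokens. \Cref{lemma:1} applies to \tsf{OPT} verbatim, since its amortized per-phase miss lower bound $\max\{c/2,1\}$ depends only on the number of clean tokens in the phase, not on the cache capacity.

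Third, to derive the upper bound, I would form the per-phase ratio $\frac{(B_i - \beta + 1) - \cL + c}{\max\{c/2,1\}}$ and maximize over $c$, exactly as in the proof of~\Cref{thm:1}; the worst case $c=2$ produces the advertised competitive ratio of $(B_i - \cL - \beta + 3)$. For the lower bound, I would construct an adversarial arrival sequence that mirrors the one witnessing the $(B_i - \cL + 1)$ lower bound in the single-query case, but where $\beta - 1$ of the $\beta$ concurrent queries cyclically occupy distinct leaves that force \tsf{L-LRU} to retain them, thereby shrinking its maneuver room to $B_i - \beta + 1$ slots. The standard round-robin request pattern on the remaining slots then forces $(B_i - \cL - \beta + 2)$ misses per phase against \tsf{OPT}'s single miss.

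The main obstacle I anticipate is rigorously justifying that ``exactly $\beta - 1$ tokens are pinned'' is the right accounting, since the $\beta$ in-flight queries may share long prefixes in the radix tree. Distinctness of complete token paths guarantees at least one differing token per other in-flight query, which suffices for the upper bound, but the lower bound requires exhibiting an instance in which this pinning is simultaneously tight across all phases without violating $\beta \cL_{\max} \leq B_i$. I also expect the amortization across the transient last phase and across phase boundaries to require the same care as in~\citep{fiat1991competitive}, though no fundamentally new idea beyond the effective-budget reduction should be needed.
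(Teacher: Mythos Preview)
Your reduction via ``effective capacity'' $B_i-(\beta-1)$ is a genuinely different route from the paper's. The paper does \emph{not} shrink the phase length: it keeps each phase at $B_i$ distinct tokens and instead upgrades the free-hit accounting inside a phase. Concretely, where the single-query argument gets $\cL$ cached tokens from the first path after the initial clean tokens, the batched argument observes that the first complete \emph{batch} of $\beta$ distinct paths contributes at least $\cL+\beta-1$ distinct tokens already in cache, so the per-phase miss bound improves from $B_i-\cL+c$ to $B_i-\cL-\beta+1+c$. The lower-bound instance is the same family you sketched: batches $\{\Gamma_1,\dots,\Gamma_{\beta-1},\Gamma_{u+\beta-1}\}$ cycled in a loop.

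The step in your plan that would fail as written is the claim that \Cref{lemma:1} ``applies to \tsf{OPT} verbatim'' after you redefine phases to contain only $B_i-\beta+1$ distinct tokens. The $\max\{c/2,1\}$ bound is not capacity-agnostic: the ``$1$'' comes precisely from the pigeonhole fact that a size-$B_i$ cache cannot hold the $B_i+1$ distinct tokens straddling a phase boundary, and the ``$c/2$'' amortization likewise uses that \tsf{OPT} must spend at least $e$ misses because only $B_i-(w+a)$ slots remain to process $B_i$ new tokens. If phases contain only $B_i-\beta+1$ distinct tokens while \tsf{OPT} still has capacity $B_i$, \tsf{OPT} can in principle cache several consecutive phases and incur zero misses, collapsing your ratio. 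To rescue the approach you would need to argue that \tsf{OPT} is also pinned to $\beta-1$ in-flight tokens and hence has the same reduced effective capacity, but that is an additional lemma (with its own subtleties, since the pinned set evolves as queries enter and leave the batch), not a verbatim reuse of \Cref{lemma:1}. The paper sidesteps this entirely by leaving the phase length and the \tsf{OPT} lower bound untouched and absorbing the batch effect into the \tsf{L-LRU} miss count instead.
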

Our analysis (\Cref{thm:1,thm:2}) shows that \tsf{L-LRU} achieves an $O(n)$ competitive ratio in the worst case. 
When fixing the KV cache size $B_i$ and the maximum length $\cL_{max}$, decreasing the minimum length $\cL$ drives the competitive ratio toward $B_i$. 
This indicates that when lengths of queries are highly imbalanced, the competitive ratio increases and the performance of \tsf{L-LRU} may degrade.

\section{Methodology}\label{sec:method}
Motivated by the formal model and analysis above, we integrate a randomized eviction algorithm~(\Cref{sec:randomized}) with a learning-based greedy routing algorithm that routes queries based on estimated end-to-end latency~(\Cref{sec:LBGR}) to balance query load while maintaining a high cache hit rate.

\begin{minipage}{0.49\linewidth}
\vspace{-1em}
\begin{algorithm}[H]
\caption{\tsf{RLT}}
\label{alg:rlt}
\small
\begin{algorithmic}[1]
\STATE Marking token set initialization: $\T \gets \varnothing$ 
\STATE KV cache size: $B_i$
\STATE Complete token paths: $\widetilde{Q} \gets Q$
\FOR{$\Gamma_j \in \widetilde {Q}$}
    \STATE $S_i^{(j)} \gets \ S_i^{(j-1)}$
    \FOR{$t \in \Gamma_j$ }
    \STATE $\T \gets \T \cup \{t\}$
    \IF{$|\T| = B_i + 1$}
        \STATE $\T \gets \{t\}$
    \ENDIF
    \IF{$t \in S_i^{(j)}$} 
        \STATE \textbf{continue}
    \ELSE 
        \IF{\text{$S_i^{(j)}$ is full}}
            \STATE $U \gets \{ \text{leaf tokens in $S_i^{(j)}$}\} \setminus \T$
            \STATE Choose $u \in U$ uniformly at random
            \STATE $S_i^{(j)} \gets \textsc{Evict}(S_i^{(j)}, u)$
        \ENDIF
        \STATE $S_i^{(j)} \gets \textsc{Load}(S_i^{(j)}, t)$
    \ENDIF
    \ENDFOR
\ENDFOR
\end{algorithmic}
\end{algorithm}
\end{minipage}
\hfill
\begin{minipage}{0.49\textwidth}
\vspace{-1em}
\begin{algorithm}[H]
\caption{\tsf{LBGR}}
\small
\label{alg:LBGR}
\begin{algorithmic}[1]
\STATE {\ttfamily\textcolor{red}{\(\triangleright\) {/* Online Routing Thread */}}}
\FOR{$q_j \in Q$} 
  \FOR{$i \in [M]$}
    \STATE Estimate the $\widetilde{h}_{ij}$ via global radix tree
    \STATE $\widehat{{Cost}}_{ij} \gets \alpha_{\text{CACHED}} \widetilde{h}_{ij} + \alpha_{\text{MISS}}\big(|q_j|-\widetilde{h}_{ij}\big)$ 
    \STATE $\phi_{ij} \gets \phi \big(\widetilde{h}_{ij},\,|q_j|-\widetilde{h}_{ij},\,\widetilde{P}_i^{(j-1)}\big)$
    \STATE $\widehat{E}_{ij} \gets \widehat{{Cost}}_{ij} + \widetilde{P}_i^{(j-1)} + \theta_i^\top \phi_{ij}$ 
  \ENDFOR
  \STATE $i^* \gets \arg\min_{i\in[M]} \widehat{E}_{ij}$,~$x_{ij} := \mathbf{1}[i=i^*]$
  \STATE $\forall i\in[M],\, \widetilde{P}_{i}^{(j)}\gets \widetilde{P}_{i}^{(j-1)} + x_{ij}\,\widehat{{Cost}}_{ij}$ 
  \STATE Route $q_j$ to $m_{i^*}$
\ENDFOR
\STATE {\ttfamily\textcolor{red}{\(\triangleright\) {/* Online Updating Thread */}}}
\IF{observe actual latency $E_{ij}$}
  \STATE $\theta_{i} \leftarrow \textsc{OnlineUpdate}\big(\theta_{i},\phi_{i j},E_{ij}-\widehat{E}_{ij}\big)$
  \STATE $\widetilde{P}_{i} \gets \textsc{Releaseload}\big(\widetilde{P}_{i}, \widehat{{Cost}}_{ij}, \rho,\Delta t\big)$
\ENDIF
    
\STATE {\ttfamily\textcolor{red}{\(\triangleright\) {/* Background Decay Thread */}}}
\STATE {Every $\Delta t$ time units:} $\forall i$, $\widetilde{P}_{i} \gets \rho\widetilde{P}_{i}$ 
\end{algorithmic}
\end{algorithm}
\end{minipage}

\subsection{Randomized Leaf Token Eviction}\label{sec:randomized}
Inspired by the classical marking algorithm~\citep{fiat1991competitive}, we introduce the \emph{\textbf{R}andomized \textbf{L}eaf \textbf{T}oken} eviction algorithm, \tsf{RLT}.
As detailed in~\Cref{alg:rlt}, \tsf{RLT} iterates over each token in every path $\Gamma_j \in \widetilde{Q}$ and marks accessed tokens by adding them to a marking set $\T$ (lines 4–7 in~\Cref{alg:rlt}). 
Once $B_i + 1$ unique tokens have been marked, all marks are cleared except for the most recently accessed token (lines 8–9 in~\Cref{alg:rlt}).
When a requested token is not in the cache and the cache is full, \tsf{RLT} evicts an unmarked leaf token uniformly at random from the current cache before loading the new token (lines 13-17 in~\Cref{alg:rlt}).

% \section{Theoretical Guarantees}
\noindent\textbf{Competitive Ratio of \tsf{RLT}.}
The effectiveness of \tsf{RLT} stems from its use of randomness, which breaks the dependence on query arrival order and enhances robustness against dynamic or adversarial query arrivals, yielding more stable performance.
We formally establish its competitive ratio in both single-query and batch processing settings in the following~\Cref{thm:3,thm:4}.

\begin{restatable}[Single-Query]{theorem}{thmC}
\label{thm:3}
    \tsf{RLT}
    is $\Theta(\log(B_i - \cL))$-competitive on worker $m_i$ with cache capacity $B_i$ under single-query processing setting, where $\cL$ is the minimal length over all $\Gamma_j \in \widetilde{Q}$.
\end{restatable}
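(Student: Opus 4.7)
The plan is to adapt the classical phase-based analysis of the randomized marking algorithm of~\citep{fiat1991competitive} to the RadixAttention setting, reusing the phase partition and the clean/new/old terminology already established for \tsf{L-LRU}. The proof splits into an upper bound of $O(\log(B_i-\cL))$ on \tsf{RLT}'s competitive ratio and a matching $\Omega(\log(B_i-\cL))$ lower bound, which together give the $\Theta$-tight result.

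For the upper bound, I would first observe that \tsf{RLT}'s marking set $\T$ is reset precisely when $B_i+1$ distinct tokens have been accessed, which aligns exactly with the phase boundaries $\{\P_v\}$ used in the proof of~\Cref{lemma:1}. Within a phase with $c$ clean tokens, each clean arrival is a guaranteed miss, contributing $c$. The heart of the argument is bounding the expected number of \emph{old-token} misses: an old token causes a miss only if an earlier clean or new miss in the same phase selected it uniformly from the unmarked leaf pool. Following the marking analysis, after $k$ miss-driven evictions the probability that a specific old token was evicted is at most $\sum_{j=1}^{k} 1/u_j$, where $u_j$ is the size of the unmarked leaf pool at the $j$-th eviction. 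Since~\Cref{lemma:2} establishes that the number of evictable leaf slots outside the minimum-length protected path is of order $B_i-\cL$ throughout a phase, the telescoped harmonic sum evaluates to $H_{B_i-\cL}$. Summing over old tokens and dividing by \Cref{lemma:1}'s lower bound $\max\{c/2,1\}$ on \tsf{OPT}'s per-phase misses yields the $O(\log(B_i-\cL))$ bound.

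For the lower bound I would invoke Yao's minimax principle and construct an oblivious adversarial distribution over complete-token-path sequences in which all paths share a fixed prefix of length $\cL$ and differ only in a trailing leaf token chosen from a universe of $B_i-\cL+1$ candidates. Conditioned on this shared prefix, \tsf{RLT} reduces to randomized paging on $B_i-\cL+1$ items with cache size $B_i-\cL$, for which the classical $\Omega(H_{B_i-\cL})$ lower bound of~\citep{fiat1991competitive} applies directly, matching the upper bound up to constants.

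The main obstacle I expect is the clean accounting of the unmarked \emph{leaf} pool: unlike classical paging where every cached item is evictable, \tsf{RLT} may only evict current leaves of the radix tree, and the leaf set changes dynamically as tokens are loaded and evicted within a phase. Establishing that the number of unmarked leaves stays of order $B_i-\cL$ throughout the phase requires a structural invariant tying marks to the radix-tree topology, namely that any internal (non-leaf) token lying on an active path must already be marked, so that the unmarked tokens are precisely leaves available for eviction. This invariant is the randomized analogue of~\Cref{lemma:3} and, I anticipate, the most delicate ingredient to formalize rigorously.
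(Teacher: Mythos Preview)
Your overall plan matches the paper's: use the phase partition and \Cref{lemma:1} to lower-bound \tsf{OPT}, bound \tsf{RLT}'s expected misses per phase via the marking analysis with the effective universe shrunk from $B_i$ to roughly $B_i-\cL$ by the radix-tree prefix structure, and pair this with a Yao-type lower bound for the $\Theta$.

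There is, however, a genuine gap in your upper-bound accounting. The per-token union bound $\Pr[\text{token evicted}]\le\sum_{j=1}^{k}1/u_j$ does not telescope to a usable total: if this sum is $H_{B_i-\cL}$ for each old token and you then ``sum over old tokens'' (of which there are $\Theta(B_i)$), you obtain $\Theta(B_i\log(B_i-\cL))$ expected misses, and dividing by $\max\{c/2,1\}$ does not collapse to $O(\log(B_i-\cL))$. The paper instead uses the standard marking bound: when the $j$-th stale token is requested, at most $c$ of the $s$ currently unmarked stale tokens are out of cache, so its miss probability is at most $c/s$; summing as $s$ runs from $n$ down to $c{+}1$, with $n=\min\{B_i-\cL+c,\,B_i\}$ (imported from \Cref{thm:1}, not \Cref{lemma:2} alone), gives expected stale misses at most $c(H_n-H_c)$, and it is \emph{this} quantity divided by $\max\{c/2,1\}$ that yields $\Theta(\log(B_i-\cL))$. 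You should replace the union-bound formulation with this $c/s$ argument.

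Two smaller remarks. First, the paper does not prove the $\Omega$ direction inside \Cref{thm:3}; it is established separately as \Cref{thm:5}, via exactly your Yao construction (common prefix of length $\cL-1$, $B_i-\cL+2$ distinct tail tokens), so your lower-bound plan is correct and simply anticipates that later theorem. Second, the leaf-only-eviction subtlety you flag is real, but the paper does not resolve it through the structural invariant you describe; it sidesteps the issue by capping the number of stale tokens that can ever miss at $n-c$ and then applying the flat-paging $c/s$ argument to those, implicitly treating eviction among them as uniform.
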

\begin{restatable}[Batch]{corollary}{thmD}
\label{thm:4}
    \tsf{RLT} is $\Theta(\log(B_i - \cL - \beta))$-competitive on worker $m_i$ with capacity $B_i$ under continuous batching setting, where $\cL$ is the minimal length over all $\Gamma_j \in \widetilde{Q}$, and $\beta$ is the batchsize.
\end{restatable}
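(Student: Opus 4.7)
The plan is to derive the batch-setting bound by reducing it to the single-query case of Theorem 3 with an appropriately shrunk effective cache capacity. The key observation, echoing the phase-accounting in Theorem 2, is that under continuous batching with $\beta \cL_{max} \le B_i$, the tokens belonging to the $\beta$ in-flight queries are being actively read by the batched forward pass and therefore cannot be evicted. In the worst case, these locked tokens consume at least $\beta - 1$ cache slots beyond the query currently being routed, so \tsf{RLT} effectively operates on a free capacity of $B_i' \approx B_i - \beta + 1$.

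First, I would adapt the phase-partitioning of $\widetilde Q$ used in Theorem 2 so that each phase (except possibly the last) contains exactly $B_i - \cL - \beta + 1$ distinct tokens accessible to the eviction algorithm, and re-establish the batch analogues of Lemmas~\ref{lemma:1}--\ref{lemma:3}, namely that \tsf{OPT} incurs at least $\max\{c/2,1\}$ amortized misses in a phase with $c$ clean tokens, and that no \emph{old} token in the sense of the marking definition can reappear after being evicted by \tsf{RLT} during a phase. These batch analogues follow by essentially the same arguments as in the single-query case, once the locked in-flight tokens are treated as permanently marked throughout the phase.

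Next, I would apply the standard Fiat et al.\ marking-algorithm analysis to \tsf{RLT} on the effective cache of size $B_i' = B_i - \beta + 1$, which bounds the expected number of misses in a phase with $c$ clean tokens by $O(c \cdot H_{B_i' - \cL})$, where $H_n$ denotes the $n$-th harmonic number. Dividing by the matching $\Omega(c)$ lower bound on \tsf{OPT} and summing across phases yields an upper bound of $O(\log(B_i - \cL - \beta))$ on the competitive ratio. The matching $\Omega(\log(B_i - \cL - \beta))$ lower bound is obtained by transporting the classical randomized-paging adversary of Fiat et al.\ into our setting: the adversary uses a pool of $B_i - \cL - \beta + 1$ equally likely candidate token paths and, at each step, issues a batch of distinct queries touching tokens outside the current cache content, forcing any randomized algorithm to match the harmonic-sum lower bound.

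The main technical obstacle will be formalizing the ``effective cache reduction'' step under \emph{continuous} (rather than synchronous) batching: the $\beta$ in-flight queries arrive and complete asynchronously, so the identity of the locked tokens changes over time, and one must verify that the marking invariants used in the phase argument (in particular, the analogue of Lemma~\ref{lemma:3}) remain valid despite this drift. Once this bookkeeping is in place, the rest of the argument is a direct re-application of Theorem 3 with $B_i$ replaced by $B_i - \beta + 1$, which explains why the result is stated as a corollary rather than an independent theorem.
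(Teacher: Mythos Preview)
Your high-level reduction to the single-query analysis is the right instinct, but the specific mechanism you propose---redefining phases to contain $B_i - \cL - \beta + 1$ distinct tokens---would break the argument. The paper keeps the phase definition unchanged: each phase still spans $B_i$ distinct tokens, so that phase boundaries coincide with \tsf{RLT}'s mark-reset events in Algorithm~\ref{alg:rlt} (lines 8--9), and Lemma~\ref{lemma:1} applies verbatim. The only new ingredient, imported directly from the proof of Theorem~\ref{thm:2}, is the tighter per-phase miss ceiling $n = \min\{B_i - \cL - \beta + c + 1,\, B_i\}$; this $n$ replaces the corresponding quantity in the stale-token expected-miss sum $c(H_n - H_c)$ of Theorem~\ref{thm:3}, and dividing by the unchanged $\max\{c/2,1\}$ bound on \tsf{OPT} immediately yields $\Theta(\log(B_i - \cL - \beta))$. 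The matching lower bound is stated separately as Corollary~\ref{thm:6}.

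Your phase redefinition fails for two concrete reasons. First, it decouples the analysis phases from the algorithm's actual marking cycles, which reset only at $|\T| = B_i + 1$; the Fiat et al.\ invariant that each unmarked leaf is equally likely to be absent holds only within a single mark cycle, so the expected-miss calculation is no longer valid across your shorter phases. Second, and more seriously, the analogue of Lemma~\ref{lemma:1} you would need does not hold: a phase containing only $B_i - \cL - \beta + 1 < B_i$ distinct tokens does not force \tsf{OPT} to miss even once, so the $\max\{c/2,\,1\} \ge 1$ lower bound on \tsf{OPT} is lost and the ratio becomes unbounded. The ``effective-cache'' intuition is not wrong, but the way to cash it out here is through the per-phase miss ceiling of Theorem~\ref{thm:2}, not by shrinking the phase length.
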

Our analysis shows that \tsf{RLT} achieves a competitive ratio of $O(\log n)$, which is a logarithmic improvement over \tsf{L-LRU}.
We further prove that no dependent algorithm can achieve a better competitive ratio than \tsf{RLT} in either single-query or continuous batching settings (\Cref{thm:5,thm:6}).
\begin{restatable}[Single-Query]{theorem}{thmE}
\label{thm:5}
    No randomized eviction algorithm can achieve a competitive ratio better than $\Theta(\log(B_i - \cL))$ on worker $m_i$ with cache capability of $B_i$ in the single-query processing setting, where $\cL$ denotes the minimal length over all $\Gamma_j \in \widetilde{Q}$.
\end{restatable}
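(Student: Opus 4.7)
The plan is to prove the lower bound via Yao's minimax principle by reducing the problem to the classical randomized paging lower bound of \citep{fiat1991competitive}. Yao's principle says it suffices to exhibit a distribution over adversarial input sequences $\widetilde Q$ such that every \emph{deterministic} eviction algorithm incurs, in expectation, at least $\Omega(\log(B_i - \cL))$ times the cost of \tsf{OPT} on the same sequence. That lower bound then transfers directly to any randomized eviction policy against an oblivious adversary.

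First I would construct a carefully chosen hard instance family that encodes a paging instance of effective cache size $k := B_i - \cL + 1$. Fix a single shared prefix $\pi$ of length $\cL - 1$, and pick $k+1$ distinct leaf tokens $t_1,\dots,t_{k+1}$ each extending $\pi$. Let every complete token path $\Gamma_j$ be of the form $\pi \| t_{r_j}$ for some index $r_j \in [k+1]$; then the minimum path length over $\widetilde Q$ is exactly $\cL$. Because every request must traverse $\pi$, the $\cL - 1$ prefix tokens must be resident in the cache immediately before each request, which leaves only $k = B_i - \cL + 1$ cache slots available for the leaves out of a universe of $k+1$ leaves. Next, I would let the adversary draw each index $r_j$ independently and uniformly from $[k+1]$, producing an oblivious random request sequence.

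On this distribution, the standard argument of \citep{fiat1991competitive} for randomized paging applies almost verbatim: partition the sequence into phases, each ending once all $k+1$ distinct leaves have been requested; the expected number of misses that any deterministic online algorithm suffers per phase is $\Omega(H_k) = \Omega(\log(B_i - \cL))$, by analyzing the expected time until a uniformly random request hits a slot not currently in the cache. Meanwhile, \tsf{OPT} (Belady's furthest-in-future) incurs at most $O(1)$ misses per phase on this structured instance, since in every phase the adversary requests exactly one ``new'' leaf from the previous phase's perspective, which \tsf{OPT} can preempt by evicting the leaf whose next occurrence is latest. Dividing gives an expected competitive ratio of $\Omega(\log(B_i - \cL))$, matching \Cref{thm:3} up to constants.

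The main obstacle is ruling out an algorithm that tries to ``cheat'' by evicting prefix tokens of $\pi$ instead of leaf tokens, since in principle the prefix occupies $\cL - 1$ slots that an algorithm might prefer to reclaim. I would handle this with a simple exchange argument: whenever an algorithm evicts a token of $\pi$, the very next request forces it to reload that token at equal or greater cost, so we can replace any such eviction by a leaf eviction without decreasing the algorithm's miss count. This reduces every deterministic policy on $\widetilde Q$ to a paging policy over the $k+1$ leaf universe with cache of size $k$, at which point the Fiat et al.\ lower bound applies and Yao's principle completes the proof.
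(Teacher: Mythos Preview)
Your proposal is correct and follows essentially the same route as the paper: both invoke Yao's principle on the identical hard instance (a shared $(\cL-1)$-prefix with $B_i-\cL+2$ distinct leaf extensions, each requested uniformly at random), obtain a $\Theta(1/(B_i-\cL))$ per-request miss probability for any deterministic algorithm, and bound \tsf{OPT} via the coupon-collector gap between consecutive misses to get the $H_{B_i-\cL+2}$ ratio. Your explicit exchange argument ruling out prefix-token evictions is a useful clarification that the paper leaves implicit.
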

\begin{restatable}[Batch]{corollary}{thmF}
\label{thm:6}
    No randomized eviction algorithm can achieve a competitive ratio better than  $\Theta(\log(B_i - \cL - \beta))$ on worker $m_i$ with cache capability of $B_i$ in the continuous batching setting, where $\cL$ denotes the minimal length over all $\Gamma_j \in \widetilde{Q}$ and $\beta$ is the batch size.
\end{restatable}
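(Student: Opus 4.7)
The plan is to derive this corollary by reducing the batched setting to the single-query lower bound of \Cref{thm:5} via a batch-padding construction that pins $\beta{-}1$ cache slots outside the adversary's reach. Concretely, I would prepend $\beta-1$ "filler" complete token paths whose generation times straddle the entire adversarial phase; any online algorithm must keep their KV entries resident while they are in flight, so these entries are effectively frozen. Together with the $\cL - 1$ tokens of the shared prefix common to all adversarial paths, this removes $(\cL - 1) + (\beta - 1)$ slots from the algorithm's usable memory, leaving an effective size of $k := B_i - \cL - \beta + 2$ over which the adversary can play. The capacity assumption $\beta \cL_{max} \leq B_i$ from \Cref{thm:2} ensures that such a filler set is feasible.

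Next I would instantiate the adversarial substream on a universe of $k+1$ distinct leaf tokens using the classical Fiat--Karp--Luby--McGeoch--Sleator--Young hard distribution that certifies the $\Omega(H_k)$ randomized paging lower bound~\citep{fiat1991competitive}: each adversarial request is a short path of length $\cL$ consisting of the shared prefix followed by one sampled leaf token, and arrivals are spaced so that at most one adversarial path is in flight at any time while all $\beta-1$ fillers remain alive. Under this schedule, the algorithm's decisions on the adversarial substream are in bijection with those of a randomized paging algorithm on a cache of size $k$. Yao's minimax principle then gives an expected miss count of $\Omega(H_k)$ per phase against the B\'el\'ady offline optimum, yielding a competitive ratio of $\Omega(\log(B_i - \cL - \beta))$, which matches the upper bound from \Cref{thm:4} up to constants.

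The main obstacle I anticipate is ruling out that batch-level concurrency gives the online algorithm a "free" opportunity to reshuffle its cache during filler generation steps, thereby evading the single-query hardness. To close this I would verify that within each filler generation step the only forced cache write is to that filler's own leaf, and that any adversarial-side rearrangement performed during such a step can be charged to an additional miss on the filler itself by lengthening each filler with an adversarially chosen trailing token; this token-level amplification preserves the size of the effective adversarial working set up to constants and lifts the single-query lower bound cleanly. A secondary issue is the degenerate regime $B_i - \cL - \beta = O(1)$, where the logarithmic bound collapses to a trivial $\Omega(1)$, which is vacuously consistent with the statement and can be handled by a separate one-line argument.
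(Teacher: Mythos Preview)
Your proposal is correct and reaches the same $\Omega(\log(B_i-\cL-\beta))$ target via Yao's principle on a reduced effective cache, but the mechanism for pinning the extra $\beta-1$ slots differs from the paper's. You use long-running filler paths that must remain in flight across the entire adversarial sub-phase; the paper instead repeats the \emph{same} $\beta-1$ short paths $\Gamma_1,\ldots,\Gamma_{\beta-1}$ inside \emph{every} batch, so the $u$-th batch is $\{\Gamma_1,\ldots,\Gamma_{\beta-1},\Gamma_{u+\beta-1}\}$ with only the last coordinate drawn uniformly at random over $B_i-\cL-\beta+3$ choices. Because the fixed paths hit in every batch after the first, only the single varying leaf can miss, and the analysis collapses verbatim to the single-query argument of \Cref{thm:5}: the same $1/(k{+}1)$ miss probability for any deterministic algorithm and the same coupon-collector expectation for \textsf{OPT}, just with $B_i-\cL+2$ replaced by $B_i-\cL-\beta+3$.

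This buys the paper two simplifications your route does not. First, the ``free reshuffling'' obstacle you flag never arises, since the pinned leaves are genuinely re-requested in every batch and are therefore always the most recently accessed---no amplification via trailing tokens is needed. Second, there is no tension with the length cap: your fillers must survive arbitrarily many adversarial arrivals, but $\beta\cL_{\max}\le B_i$ bounds their length, so a single filler can only straddle $O(\cL_{\max}/\cL)$ short adversarial requests; you would have to re-seed fillers periodically and argue the lower bound survives the splicing, which is fixable but adds bookkeeping the paper's synchronous-batch construction avoids entirely.
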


\subsection{Learning-based Greedy Routing}\label{sec:LBGR}
To handle dynamically evolving arrivals, we introduce the \emph{\textbf{L}earning-\textbf{B}ased \textbf{G}reedy \textbf{R}outing} (\tsf{LBGR}) algorithm~(\Cref{alg:LBGR}), which estimates per-LLM end-to-end latency via an online regression and greedily routes each query to the LLM with the lowest estimate.

\noindent\textbf{Overall Cost Formulation.}
The end-to-end latency $E_{ij}$ of routing query $q_j$ to worker $m_i$ can be modeled in the decomposition of $E_{ij} = {Cost}_{ij} + {P}_i^{(j-1)}$.
Building on this, \tsf{LBGR} estimates this latency via the following predictive model:
\begin{equation}
    \widehat{E}_{ij}
    =
    \underbrace{\widehat{{Cost}}_{ij}}_{\text{service time estimation}}
    +
    \underbrace{\widetilde{P}_i^{(j-1)}}_{\text{queue load estimation}}
    +
    \underbrace{\theta_i^\top \phi_{ij}}_{\text{residual correction}}
    \label{eq:LBGR-score}
\end{equation}
where $\widehat{{Cost}}_{ij}$ is the estimated service time,
$\widetilde{P}_i^{(j-1)}$ is the current estimated queue load of $m_i$ (see below), and $\theta_i^\top \phi_{ij}$ is a learned regression term (see below) that captures residual latency bias.

\noindent\textbf{Service Time Estimation.}
Building on the global radix tree in SGLang (which records the cache state of each worker and enables efficient estimation of cache hit rates; see~\Cref{app:tree} for futher discussion), we estimate the number of cache-hit tokens for query $q_j$ on worker $m_i$ as $\widetilde{h}_{ij}$, serving as a proxy for the true hit count $h_{ij}(S_i^{(j-1)}, q_j)$ under cache state $S_i^{(j-1)}$. 
The estimated service time is then:
\begin{equation}
    \widehat{{Cost}}_{ij}
    =
    \alpha_{\text{CACHED}}\cdot\widetilde{h}_{ij} + \alpha_{\text{MISS}}\cdot\big(|q_j| - \widetilde{h}_{ij}\big)
    \label{eq:LBGR-baseline}
\end{equation}
where we omit the time cost of output token generation and absorb it into the learned residual term. 

\noindent\textbf{Queue Load Estimation.}
To approximate queue load, each $m_i$ maintains a queue load $\widetilde{P}_i$ that is updated with each incoming query. 
When $q_j$ is routed to $m_i$ ($x_{ij} =1$), the load is incremented as:
\begin{equation}
    \widetilde{P}_i^{(j)} = \widetilde{P}_i^{(j-1)} + x_{ij}\cdot\widehat{{Cost}}_{ij}.
\end{equation}
To further simulate the natural reduction in queue load over time, we apply exponential decay independently every $\Delta t > 0$ time units:
$\widetilde{P}_i \leftarrow \rho \widetilde{P}_i$,
where $\rho \in (0,1)$ is the decay factor.
Upon completion of $q_j$, \tsf{LBGR} releases any remaining load associated with $q_j$ (line 13 in~\Cref{alg:LBGR}).

\noindent\textbf{Greedy Routing with Online Residual Correction.}
Latency bias naturally arises from complex environmental factors that are not captured by service time or queue load.
To account for these fluctuations, we learn a lightweight residual model for each $m_i$ using online linear regression parameterized by $\theta_i$. 
For each query $q_j$, we extract the feature vector $\phi_{ij} := \phi\big(\widetilde{h}_{ij},\,|q_j| - \widetilde{h}_{ij},\, \widetilde{P}_i^{(j-1)}\big)$, 
and predict end-to-end latency $\widehat{E}_{ij}$ for all workers $i \in [M]$ using~\Cref{eq:LBGR-score}. The query $q_j$ is then dispatched to the worker with the lowest estimated latency, $i^* = \arg\min_{i \in [M]} \widehat{E}_{ij}$. 
Upon observing the realized latency $E_{ij}$, the model $\theta_i$ is updated online by minimizing the squared loss $\big( E_{ij} - \widehat{E}_{ij} \big)^2$.

\section{Evaluation}\label{sec:exp}
% \subsection{Experiment Setup}
\noindent\textbf{Models.}
Following~\citep{zheng2024sglangefficientexecutionstructured}, we evaluate two types of LLMs: dense Llama-3.1 models~\citep{grattafiori2024llama3herdmodels} and the sparse (MoE) Mixtral model~\citep{jiang2024mixtral}, with model sizes ranging from 8B to 70B.
We vary the number of deployed workers from 1 to 10, with 4 workers used as the default setting.
All experiments for Llama-3.1-8B-Instruct are run on 10 NVIDIA L40 GPUs, while experiments with Llama-3.1-70B-Instruct and Mixtral 8$\times$7B use 4 NVIDIA H200 GPUs.

\noindent\textbf{Baselines.}
% Our implementation is built on the SGLang-0.4.6 codebase. 
For the eviction policy, we compare \tsf{RLT} with the default \tsf{L-LRU} used in SGLang (state-of-the-art LLM serving system).
For load balancing, we evaluate \tsf{LBGR} against three routing algorithms: (i) random routing, (ii) round-robin routing~\citep{roundrobin}, and (iii) cache-aware routing~\citep{cacheaware}.
These combinations yield three baselines: (1) \tsf{Random+LRU}~\footnote{We use \tsf{LRU} to denote \tsf{L-LRU} in baseline names for simplicity.}, (2) \tsf{Round-Robin+LRU}, and (3) \tsf{Cache-Aware+LRU}, where \tsf{Cache-Aware+LRU} is \textbf{the current state-of-the-art}.

\begin{figure}[t]
    \centering
    \includegraphics[width=\linewidth]{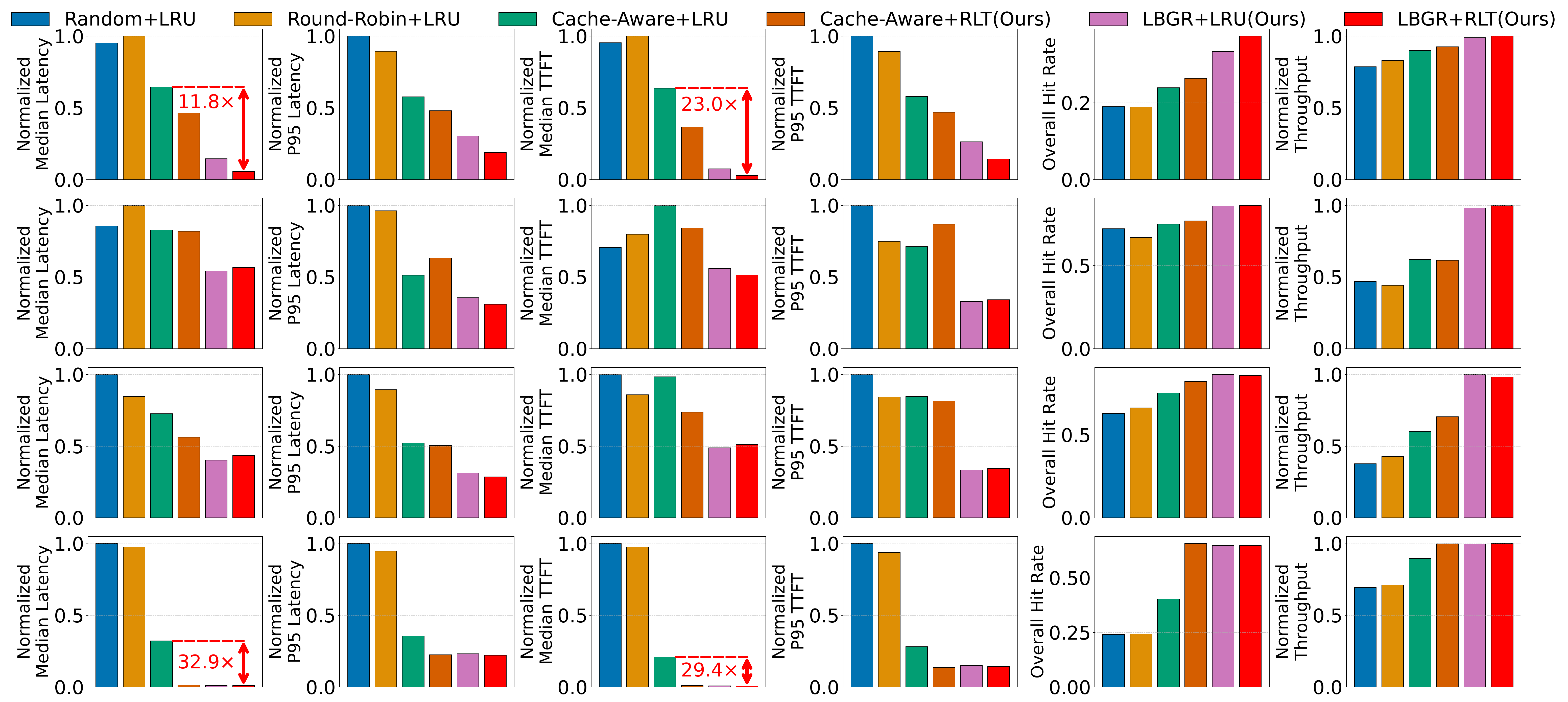}
     \caption{Results on Llama-3.1-8B-Instruct (Latency, TTFT, and Throughput normalized for comparison).
    Rows correspond to: GSP (top), ShareGPT (second), UltraChat (third), and Loogle (bottom). 
    For the first four metrics, lower is better; for the last two, higher is better.
    Our algorithms consistently outperform all baselines across all benchmarks and metrics.
     }
    \label{fig:main}
    \vspace{-20pt}
\end{figure}

\noindent\textbf{Workloads.}
Following~\citep{zheng2024sglangefficientexecutionstructured}, we evaluate over 3 distinct prefix-sharing workloads under limited cache memory, spanning both \emph{synthetic} and \emph{real-world} scenarios:
(1) Synthetic prefix-caching test using the Generated Shared Prefix (GSP) benchmark~\citep{gsp};
(2) Multi-turn conversations using real-world logs from ShareGPT~\citep{sharegpt} and UltraChat~\citep{ding2023enhancingchatlanguagemodels};
(3) Long-document QA using Loogle~\citep{li2024looglelongcontextlanguagemodels}.
We extend these benchmarks by introducing variability in prompt lengths to simulate realistic and challenging serving conditions.
The number of output tokens is varied from 4 to 128, with 4 used as the default.
Furthermore, we consider two distinct query arrival orders: (i) a random query order, and (ii) a worst-case round-robin order, with the random order used by default.
All workloads follow a Poisson arrival process, and we vary the request rate from 4 to 20 requests/s, with 12 requests/s used as the default.

\noindent\textbf{Metrics.}
We report four main performance metrics: cache hit rate, throughput, latency, and time to first token (TTFT).
For latency and TTFT, we report the median (P50) to reflect the typical user experience and the 95th percentile (P95) to capture tail latency.
Furthermore, we provide a fine-grained breakdown of runtime overhead, reporting time cost for eviction and routing operations.
For additional experimental setup details, see~\Cref{app:exp}.

% \begin{figure}[t!]
%     \centering
%     \includegraphics[width=\linewidth]{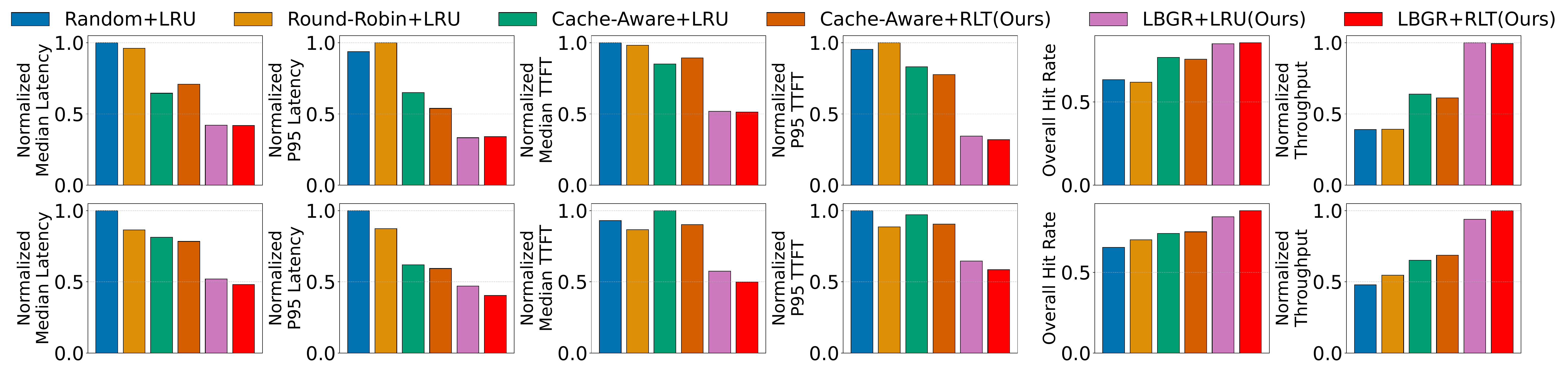}
%      \caption{Results on Llama-3.1-70B-Instruct (top) and Mixtral-8x7B-Instruct-v0.1 (bottom) under the ShareGPT benchmark.
%      Our algorithms consistently outperform all baselines across metrics.
%      }
%     \label{fig:model_main}
% \end{figure}

% \begin{figure}[t!]
%     \centering
%     \includegraphics[width=\linewidth]{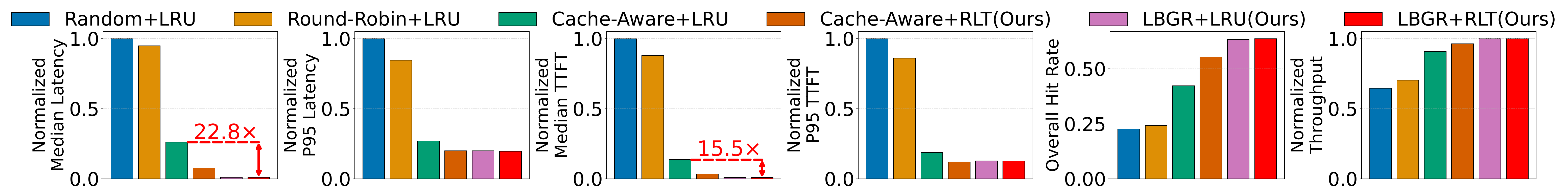}
%      \caption{
%      Results on Llama-3.1-8B-Instruct under the Loogle benchmark with a worst-case round-robin arrival order.
%      Round-robin alternates queries to break KV locality, where LBGR+RLT addresses this with adaptive routing and randomized eviction, improving all metrics over all baselines.
%      }
%     \label{fig:worst}
% \end{figure}

% \begin{figure}[t!]
%     \centering
%     \includegraphics[width=\linewidth]{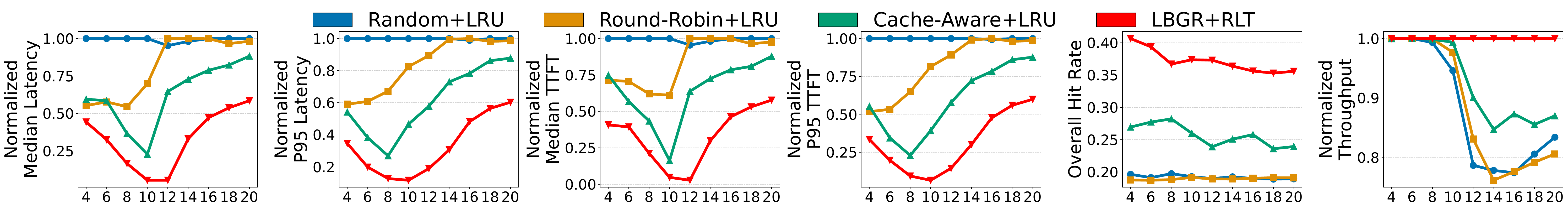}
%      \caption{Results on Llama-3.1-8B-Instruct under GSP benchmark with varying request rates.
%      }
%     \label{fig:rate}
% \end{figure}

% \subsection{Main Results.}
\noindent\textbf{End-to-End Performance.}
We present the main results on Llama-3.1-8B-Instruct across 4 benchmarks in Figure~\ref{fig:main}.
Our methods, \tsf{Cache-Aware+RLT}, \tsf{LBGR+LRU}, and \tsf{LBGR+RLT}, consistently outperform all baselines, \textbf{achieving the lowest latency and TTFT on every benchmark}.
On average, \tsf{LBGR+RLT} achieves \textbf{30.9$\times$} and \textbf{44.49$\times$} improvements in median latency and TTFT compared to \tsf{Random+LRU}, and improves over the state-of-the-art \tsf{Cache-Aware+LRU} by \textbf{11.96$\times$} in median latency and \textbf{14.06$\times$} in median TTFT.
For tail performance, it still achieves an average \textbf{2.03$\times$} improvement in P95 latency and \textbf{2.62$\times$} speedup in P95 TTFT over \tsf{Cache-Aware+LRU}.
When applying \tsf{RLT}, \tsf{Cache-Aware+RLT} achieves \textbf{6.98$\times$} and \textbf{6.40$\times$} faster than \tsf{Cache-Aware+LRU} in median latency and TTFT on average.
With the learning-based strategy, \tsf{LBGR+LRU} also outperforms \tsf{Cache-Aware+LRU}, achieving \textbf{9.98$\times$} lower median latency and \textbf{9.58$\times$} lower median TTFT on average.
Furthermore, \tsf{LBGR+RLT} achieves the highest cache hit rate and throughput across all benchmarks, averaging \textbf{36.45\%} higher hit rate, \textbf{36.51\%} higher throughput than \tsf{Cache-Aware+LRU}.
These results demonstrate the strong and comprehensive efficiency gains of our algorithms under dynamic query arrivals with varying lengths.

\begin{figure}[t]
  \centering
  \begin{minipage}{\linewidth}
    \captionsetup{skip=6pt}
    \includegraphics[width=\linewidth]{figures/model_main.pdf}
    \captionof{figure}{Results on Llama-3.1-70B-Instruct (top) and Mixtral-8$\times$7B-Instruct-v0.1 (bottom) under the ShareGPT benchmark.
     Our algorithms consistently outperform all baselines across metrics.}
    \label{fig:model_main}

    \vspace{4pt} 

    \includegraphics[width=0.98\linewidth]{figures/worst.pdf}
    \captionof{figure}{Results on Llama-3.1-8B-Instruct under the Loogle benchmark with the worst-case round-robin arrival order.
     Round-robin alternates queries to disrupt KV locality.  \tsf{LBGR+RLT} counters this, outperforming all baselines across all metrics.}
    \label{fig:worst}

    \vspace{4pt} 

    \includegraphics[width=0.98\linewidth]{figures/rate.pdf}
    \captionof{figure}{Results on Llama-3.1-8B-Instruct under GSP benchmark with varying request rates.
    % Our methods achieve the best performance across all rates.
    }
    \label{fig:rate}
  \end{minipage}
  \vspace{-17pt}
\end{figure}

\noindent\textbf{Model Size \& Architecture.}
We evaluate the generalizability of our algorithms across model scales and architectures, using Llama-3.1-70B-Instruct as a representative large dense model and Mixtral-8$\times$7B-Instruct-v0.1 as a sparse MoE model.
As shown in Figure~\ref{fig:model_main}, our methods consistently outperform all baselines on the ShareGPT benchmark across all metrics.
\tsf{LBGR+RLT} achieves the lowest latency and TTFT, reducing median latency by \textbf{38\%} and TTFT by \textbf{44.87\%} compared with the best baseline, and also achieves the highest hit rate and throughput on both models.
These results demonstrate the generalizability and adaptability of our algorithms serving both large dense and sparse MoE models.
Additional results on other benchmarks are provided in~\Cref{app:results}.

\noindent\textbf{Query Order.}
We evaluate the robustness of our algorithms on Loogle under worst-case query arrivals, where queries from each document arrive in a round-robin manner.
As shown in Figure~\ref{fig:worst}, \tsf{LBGR+RLT} maintains the \textbf{strongest} performance in the worst-case setting, achieving the \textbf{lowest} median and P95 latency and TTFT, and the \textbf{highest} hit rate and throughput compared with all baselines.
Notably, it reduces median latency and TTFT by \textbf{22.8$\times$} and \textbf{15.5$\times$} compared to \tsf{Cache-Aware+LRU}.
These results highlight the robustness of our algorithms under adversarial query arrivals.

\begin{figure}[t]
  \centering
  \begin{minipage}{\linewidth}
    \captionsetup{skip=6pt}
    \includegraphics[width=0.98\linewidth]{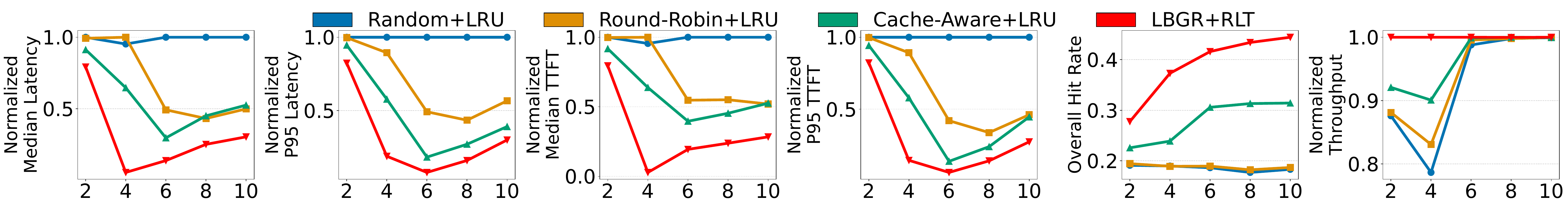}
     \captionof{figure}{Results on Llama-3.1-8B-Instruct under GSP benchmark with varying number of workers (LLMs), where the $x$-axis denotes the number of workers.
     }
    \label{fig:workers}

    \vspace{3pt} 

    \includegraphics[width=0.98\linewidth]{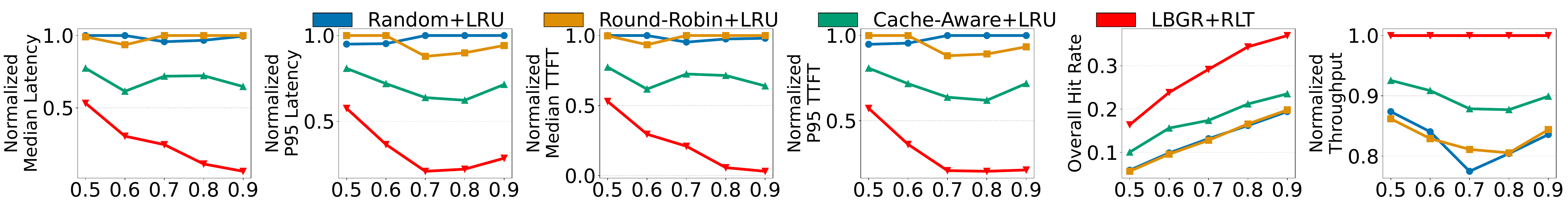}
     \captionof{figure}{Results on Llama-3.1-8B-Instruct under GSP benchmark with varying size of KV cache, where the $x$-axis denotes the percentage of GPU memory used by the KV cache.
     }
    \label{fig:size}
  \end{minipage}
  \vspace{-5pt}
\end{figure}

\renewcommand{\arraystretch}{1.3}
\begin{table*}[t]
\small
\caption{
Ablation comparison of performance and runtime overhead between \tsf{Cache-Aware+LRU} and our methods on the GSP benchmark. 
Time-based metrics ($\downarrow$) are reported in milliseconds (ms), while hit rate ($\uparrow$) and throughput ($\uparrow$) are measured in percentage and requests per second, respectively.
}
\setlength{\tabcolsep}{1pt}
  \label{tab:effectiveness}
  \centering
  \scalebox{0.8}
  {\begin{tabular}
    {|c|c|c|c|c|c|c|c|c|}
    \noalign{\global\arrayrulewidth1pt}\hline\noalign{\global\arrayrulewidth0.4pt}
   {\normalsize \textbf{Method}}  
    & \textbf{P50 Latency} & \textbf{P95 Latency}  & \textbf{P50 TTFT }& \textbf{P95 TTFT} &\textbf{ Hit Rate} & \textbf{Throughput}  & \textbf{\makecell{Average\\Eviction Time}} & \textbf{\makecell{Average\\Routing Time}}    \\
    \hline
    \tsf{Cache-Aware+LRU} & 
    26680.55 & 46766.77 & 25022.76 & 46139.36&  
    23.89\% &  10.73 &     0.13 &    \textbf{0.47} \\
    \hline
    \textbf{\tsf{Cache-Aware+RLT} (Ours)} & 
   19191.25  &       38917.27 &    14332.81 &    37504.69 &  26.36\% &  
  11.05 &      0.71 &     0.51  \\
   \hline
    \textbf{\tsf{LBGR+LRU} (Ours)} & 
   6025.11 &         24561.47&    2958.01 &    21073.78 &  33.33\% &  
   11.80 &       \textbf{0.09} &     1.03 \\
   \hline
    \textbf{\tsf{LBGR+RLT} (Ours)} & 
   \textbf{2263.61} &      \textbf{15334.89} &    \textbf{1088.57} &    \textbf{11495.05} &  \textbf{37.31\%} &  
   \textbf{11.92} &       1.05 &     1.45
    \\
\noalign{\global\arrayrulewidth1pt}\hline\noalign{\global\arrayrulewidth0.4pt}
  \end{tabular}
}
\vspace{-8pt}
\end{table*}

\noindent\textbf{Request Rate.}
We vary the request rate from 4 to 20 to evaluate the robustness of our method under different query arrival intensities.
As shown in Figure~\ref{fig:rate}, \tsf{LBGR+RLT} consistently achieves the \textbf{lowest} latency and TTFT at both median and P95 across all request rates.
It also consistently delivers the \textbf{highest} cache hit rate and throughput across all settings.
These results underscore the strong performance and robustness of our method under varying query loads.

\noindent\textbf{Number of Workers.}
We assess the performance of our algorithms under different numbers of deployed workers (LLMs), varying the number of workers from 2 to 10.
As shown in Figure~\ref{fig:workers}, \tsf{LBGR+RLT} consistently outperforms all baselines across all metrics and settings.
Even with as few as 2 workers, it maintains leading performance.
{
For throughput, our method is higher when the number of workers is relatively small (e.g., 2 or 4). As the number of workers increases ($\geq$ 6), the throughput of all methods converges to the same value. 
This is expected: with the request rate fixed at 12 requests/s, once the system has sufficient capacity, all methods can saturate this limit.}
These results underscore the scalability of our algorithms across varying numbers of deployed workers.

\noindent\textbf{KV Cache Size.}
We evaluate the impact of the KV cache size by varying the allocated GPU memory percentage from 50\% to 90\% on L40.
As shown in Figure~\ref{fig:size}, our algorithms consistently outperform all baselines across all metrics and cache size settings.
As the cache size increases, the performance gap between our method and the baselines widens significantly.
Even under constrained settings (e.g., using only 50\% of GPU memory), our method maintains a clear lead.
These results demonstrate the robustness and strong performance of our algorithms under varying KV cache availability.

\noindent\textbf{Ablation Study on the Effectiveness and Overhead of \tsf{RLT} and \tsf{LBGR}.}
Table~\ref{tab:effectiveness} presents a detailed analysis of the effectiveness and runtime overhead of our algorithms.
Using \tsf{RLT}, \tsf{Cache-Aware+RLT} consistently outperforms \tsf{Cache-Aware+LRU} across all metrics, reducing median latency and TTFT by \textbf{28.1\%} and \textbf{42.7\%}, respectively.
This confirms the effectiveness of \tsf{RLT} under dynamic and length-imbalanced query arrivals.
Furthermore, adopting the learning-based routing strategy, \tsf{LBGR+LRU} achieves even greater gains over \tsf{Cache-Aware+LRU}, reducing median latency and TTFT by \textbf{77.4\%} and \textbf{88.2\%}, respectively.
This highlights the adaptiveness and effectiveness of \tsf{LBGR} compared to static routing strategies used in SGLang.
Combining both algorithms, \tsf{LBGR+RLT} achieves the \textbf{best} performance, reducing median latency and TTFT by \textbf{11.8$\times$} and \textbf{23$\times$}, respectively.
This shows that integrating learning-based routing with randomized eviction provides strong robustness and performance.
In terms of runtime overhead, the total added runtime overhead of \tsf{RLT} and \tsf{LBGR} is only about~\textbf{2ms} per query, which is negligible compared to overall end-to-end latency.
{
Specifically, for eviction, \tsf{RLT} adds only 0.58ms over \tsf{LRU} with \tsf{Cache-Aware}, and 0.96ms when paired with \tsf{LBGR}. For routing, \tsf{LBGR} adds only 0.56ms over \tsf{Cache-Aware} when used with \tsf{LRU}, and 0.94ms when used with \tsf{RLT}.
}
This demonstrates the practicality of our algorithms for real-world deployment.

\noindent\textbf{More Experiments.}
We present additional results by varying the shared-prefix ratio, number of serving queries, output token length, and maximum batch size, where our methods achieve the best performance across all settings (Appendix~\Cref{fig:prefix,fig:queries,fig:output,fig:batch}).
We also conduct two ablations: (i) comparing \tsf{RLT} and \tsf{L-LRU} on a single worker, where \tsf{RLT} shows clear improvements (Appendix~\Cref{tab:single}), and (ii) evaluating the impact of decay interval $\Delta t$, which highlights the importance of timely queue load decay for effective query routing (Appendix~\Cref{fig:interval}).
See~\Cref{app:results} for details.

\section{Limitations and Conclusion}

This work presents the first unified mathematical model that captures the core tensions between KV cache eviction and query routing in the KV cache-aware load balancing problem.
Our analysis identifies the theoretical limitations of existing approaches and leads to principled algorithms that combine provably competitive randomized eviction with learning-based methods for adaptive query routing under dynamic workloads.
We validate proposed algorithms through extensive experiments across 4 benchmarks and 3 different prefix-sharing settings, demonstrating substantial improvements in inference efficiency and notable reductions in end-to-end latency.

One limitation of our current implementation is the lack of evaluation for multi-modal inference.
We focus on text-only KV cache and implement our algorithms based on the SGLang codebase, leaving support for multi-modality for future work.
Furthermore, our available computation resources limit the experiments to at most 10 workers in a single-domain setup, which may not fully capture the behavior in larger or geo-distributed deployments. 
Exploring broader scales and additional domains is left for future investigation.

% \subsubsection*{Acknowledgments}
% % Use unnumbered third level headings for the acknowledgments. All
% % acknowledgments, including those to funding agencies, go at the end of the paper.
% We would like thank 

\newpage
\section*{Reproducibility Statement}
We have taken concrete steps to ensure reproducibility of our results. 
An anonymized repository is included in the supplemental materials, containing the complete source code, configuration files, and scripts necessary to reproduce the experiments. 
All implementation details, such as model versions, hardware specifications, and evaluation procedures, are described in~\Cref{sec:exp} and~\Cref{app:exp}. 
The design and implementation of our proposed algorithms are presented in \Cref{sec:method} and~\Cref{app:exp}. 
For theoretical results, complete and detailed proofs are provided in~\Cref{app:proofs}.

\bibliography{iclr2026_conference}
\bibliographystyle{iclr2026_conference}

\appendix
\newpage
\appendix
\section{Experimental Details}\label{app:exp}
\noindent\textbf{Models.}
Following~\citep{zheng2024sglangefficientexecutionstructured}, we evaluate two types of LLMs: dense Llama-3.1 models~\citep{grattafiori2024llama3herdmodels} and the sparse (MoE) Mixtral model~\citep{jiang2024mixtral}, with model sizes ranging from 8B to 70B.
% In our main setting, we primarily use Llama-3.1-8B-Instruct.
The number of deployed workers varies between 1 and 10 during evaluation, and is set to 4 by default unless otherwise specified.
All experiments for Llama-3.1-8B-Instruct are run on 10 NVIDIA L40 GPUs, while experiments with Llama-3.1-70B-Instruct and Mixtral 8$\times$7B use 4 NVIDIA H200 GPUs.
Unless otherwise specified, we fix the KV cache size to approximately 200k tokens for each model to ensure a fair comparison.
We use BF16 precision for all models, and apply quantization to FP8 for Llama-3.1-70B to accommodate memory limitations and maintain a 200k-token KV cache.
With these settings, the 200k-token KV cache fits naturally within the maximum available GPU memory for Llama-3.1-Instruct-8B on L40 and Llama-3.1-Instruct-70B on H200 without requiring manual adjustment.
For Mixtral, we manually configure the KV memory budget to use 80\% of the GPU memory to support a comparable cache size.
Unless otherwise specified, we allow each worker to automatically maximize the running batch size (maximum running queries) subject to the available GPU memory.

\noindent\textbf{Baselines.}
For the eviction policy, we compare our randomized eviction algorithm, \tsf{RLT}, against the default adopted \tsf{L-LRU} eviction strategy in SGLang.
This choice is motivated by two main reasons:
(i) LRU-based eviction is the dominant approach in existing systems~\citep{zheng2024sglangefficientexecutionstructured, kwon2023efficient, llmd, dynamo} and thus represents the mainstream design choice, and
(ii) Our implementation builds on the SGLang codebase, which is the state-of-the-art open-source LLM serving system, and uses \tsf{L-LRU} as its built-in eviction policy.
Therefore, comparing against \tsf{L-LRU} is both reasonable and representative, and also demonstrates the effectiveness of our proposed method.
For load balancing, we evaluate \tsf{LBGR}  against three routing algorithms: (i) random routing, (ii) round-robin routing~\citep{roundrobin}, and (iv) cache-aware routing~\citep{cacheaware}.
Round-robin routing cycles through workers in order, whereas cache-aware routing switches between the highest-hit-rate and the least-loaded routing based on a predefined heuristic load-balance threshold.
These combinations yield three baselines: (1) \tsf{Random+LRU}, (2) \tsf{Round-Robin+LRU}, and (3) \tsf{Cache-Aware+LRU}, where \tsf{Cache-Aware+LRU} is the current state-of-the-art.

\noindent\textbf{Workloads.}
Following prior work~\citep{zheng2024sglangefficientexecutionstructured}, we evaluate over 3 distinct prefix-sharing workloads under limited cache memory, spanning both \emph{synthetic} and \emph{real-world} scenarios:
(1) Synthetic prefix-caching test using the Generated Shared Prefix (GSP) benchmark~\citep{gsp};
(2) Multi-turn conversations using real-world logs from ShareGPT~\citep{sharegpt} and UltraChat~\citep{ding2023enhancingchatlanguagemodels};
(3) Long-document QA using Loogle~\citep{li2024looglelongcontextlanguagemodels}.
We extend these benchmarks by introducing variability and imbalance in prompt lengths to simulate realistic and challenging serving conditions.
The number of output tokens is varied from 4 to 128, with 4 used as the default.

For the GSP benchmark, we consider 128 groups, each containing 32 queries (a total of 4096 queries) that share the same prefix and prompt length, differing only in their suffixes.
To introduce prompt-length imbalance, we assign lengths cyclically across groups using 5 values spanning 3 representative scales: small (512 tokens), medium (1024 and 2048 tokens), and large (4096 and 8192 tokens).
All groups share the same prefix ratio, with 4 settings evaluated (0.3, 0.5, 0.7, 0.9) and 0.5 used as the default.
For the multi-turn conversation setting, we evaluate two benchmarks, ShareGPT and UltraChat, using 128 clients, each maintaining an independent multi-turn conversation.
To capture the imbalance and dynamic nature of real-world conversations, we vary the number of conversation rounds per client across \{2, 4, 6, 8\}, with each round introducing a 1024-token user input (padded as needed to reach the target length).
In the long-document QA task on the Loogle benchmark, we randomly select 512 documents, each paired with the full set of questions.
To reflect input-length imbalance, we vary the document length by truncating each one to one of four target lengths: \{1024, 2048, 4096, 8192\} tokens.

Furthermore, we consider two distinct query arrival orders: (i) a random query order, and (ii) a worst-case round-robin order.
Unless otherwise specified, we use the random order as the default setting. 
For the GSP and Loogle benchmarks, we additionally evaluate under a round-robin order, where we iterate over groups (or documents) in a fixed cycle and issue one query per group/document each turn, repeating this cycle until all queries have been dispatched.
All workloads follow a Poisson arrival process, and we vary the request rate from 4 to 20 requests/s, with 12 requests/s used as the default.

\noindent\textbf{Metrics.}
We report four main performance metrics: cache hit rate, throughput, latency, and time to first token (TTFT).
For latency and TTFT, we report median (P50) and P95.
Furthermore, we provide a fine-grained breakdown of runtime overhead, including the time cost for eviction operations in \tsf{L-LRU} and \tsf{RLT}, and the routing operations of \tsf{LBGR}.

\noindent\textbf{Implementation.}
Our implementation is based on the SGLang-0.4.6 codebase, a state-of-the-art open-source LLM serving system.
We implement \tsf{RLT} in Cython to minimize eviction overhead, and integrate \tsf{LBGR} into SGLang's existing Rust-based cache-aware routing framework as a drop-in replacement policy.
In the implementation of \tsf{LBGR}, we scale the overall time cost estimation to a per-1k-token unit to improve regression stability and reduce sensitivity to noise.
For service time estimation ($\widehat{{Cost}}_{ij}$), we fix the cached token coefficient to $\alpha_\text{CACHED} = 0$ ms per 1k tokens and the uncached token coefficient to $\alpha_\text{MISS} = 1000$ ms per 1k tokens.
For the background decay thread, we smooth the worker queue load using exponential decay with factor $\rho = 31/32$, updated every 20ms.
For the online residual model, we use a 4-dimensional linear regression model with three input features and one bias term, updated online with a learning rate of 0.992.

\begin{figure}[t]
    \centering
    \includegraphics[width=\linewidth]{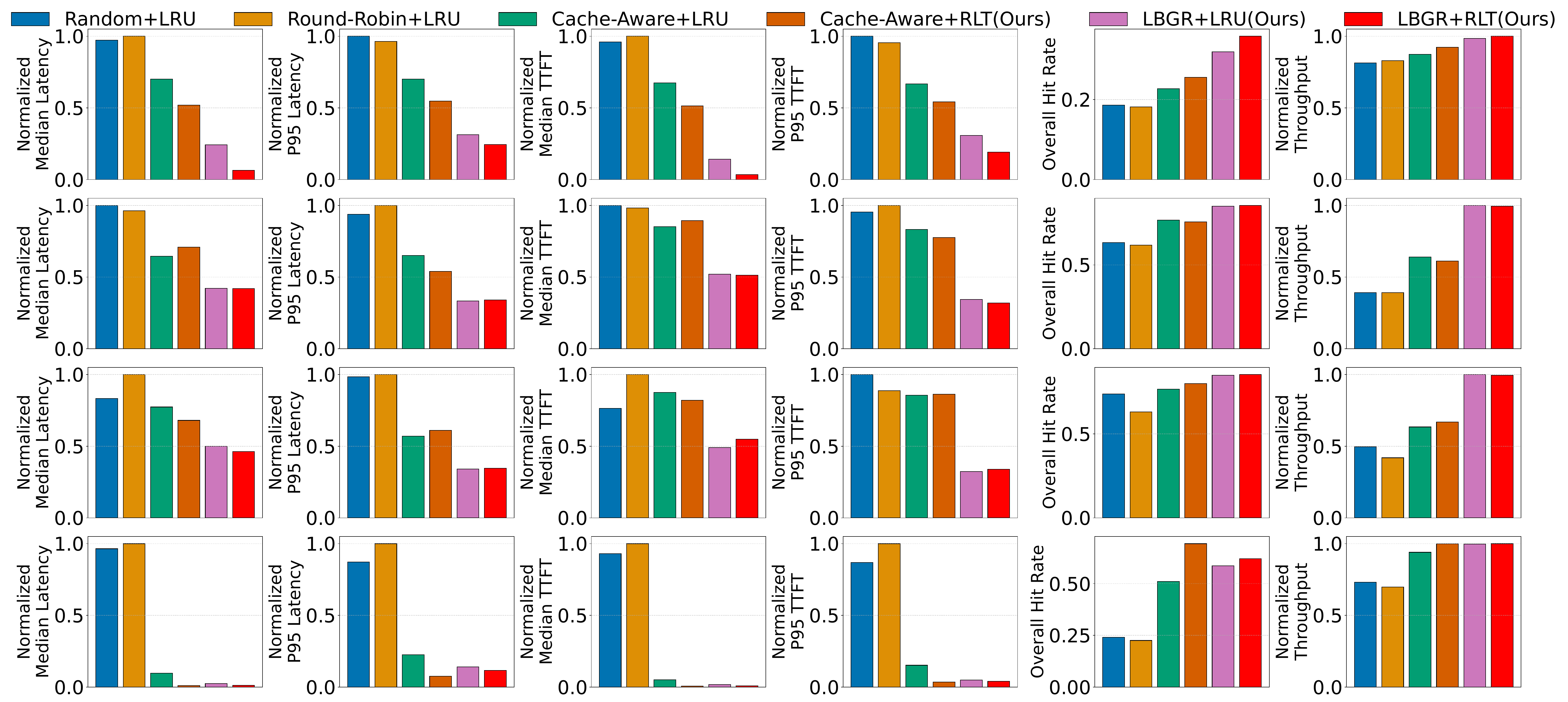}
     \caption{Results on Llama-3.1-70B-Instruct (Latency, TTFT, and Throughput normalized for comparison).
    Rows correspond to: GSP (top), ShareGPT (second), UltraChat (third), and Loogle (bottom).
    For the first four metrics, lower is better; for the last two, higher is better. 
    Our algorithms consistently outperform all baselines across all benchmarks and metrics.
     }
    \label{fig:70B}
\end{figure}

\begin{figure}[t!]
    \centering
    \includegraphics[width=\linewidth]{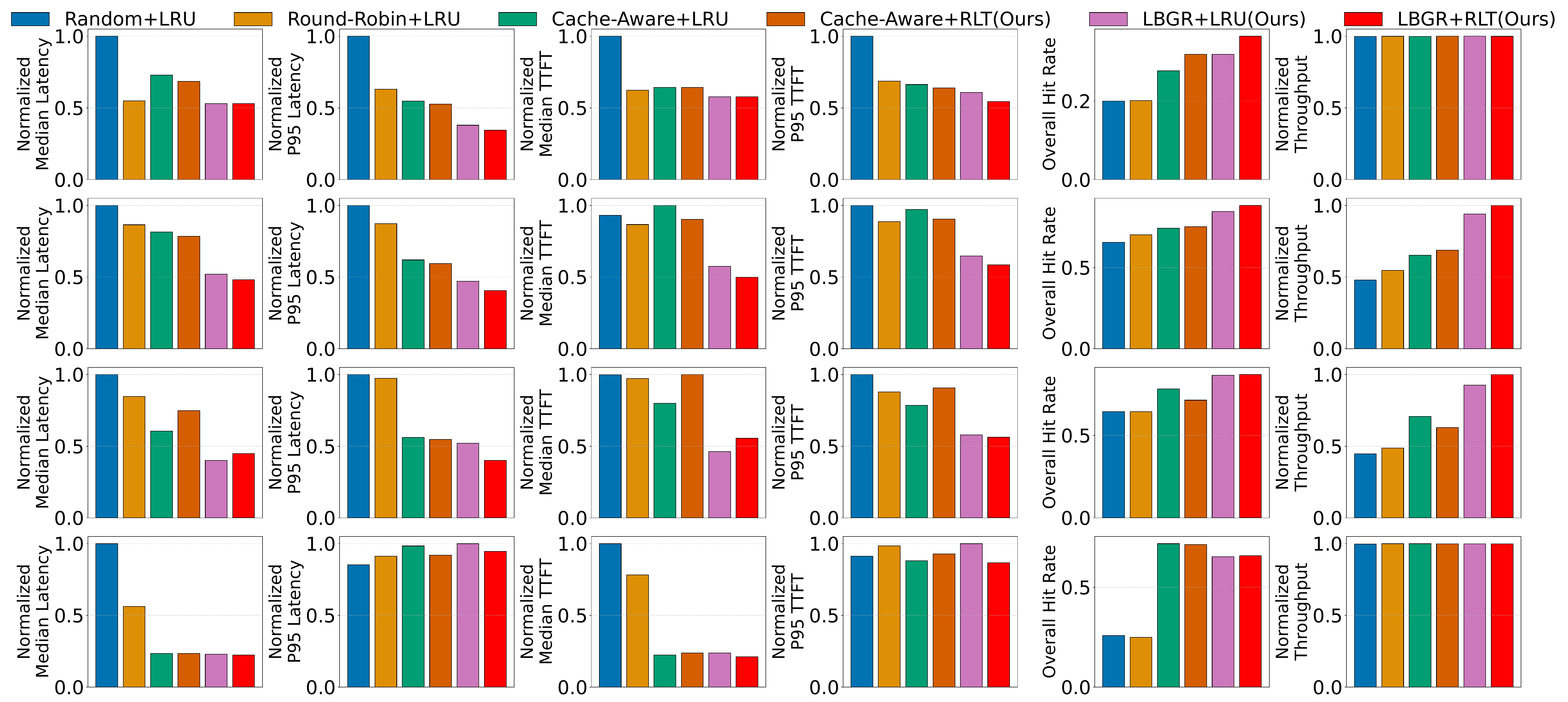}
     \caption{Results on Mixtral-8$\times$7B-Instruct-v0.1 (Latency, TTFT, and Throughput normalized for comparison).
    Rows correspond to: GSP (top), ShareGPT (second), UltraChat (third), and Loogle (bottom).
    For the first four metrics, lower is better; for the last two, higher is better. 
    Our algorithms consistently outperform all baselines across all benchmarks and metrics.
     }
    \label{fig:mixtral}
\end{figure}

\begin{figure}[t!]
    \centering
    \includegraphics[width=0.98\linewidth]{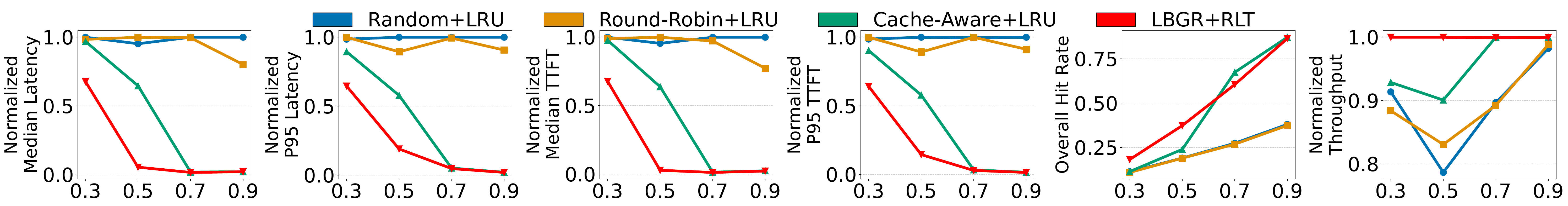}
     \caption{Results on Llama-3.1-8B-Instruct under GSP benchmark with varying shared prefix ratio.
     }
    \label{fig:prefix}
\end{figure}

\begin{figure}[t!]
    \centering
    \includegraphics[width=0.98\linewidth]{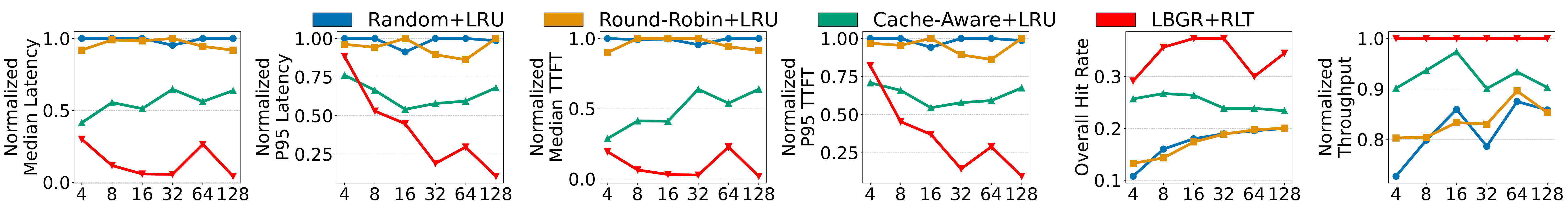}
     \caption{Results on Llama-3.1-8B-Instruct under GSP benchmark with varying number of queries, where the $x$-axis denotes the number of queries per shared-prefix group.
     }
    \label{fig:queries}
\end{figure}

\begin{figure}[t]
    \centering
    \includegraphics[width=0.98\linewidth]{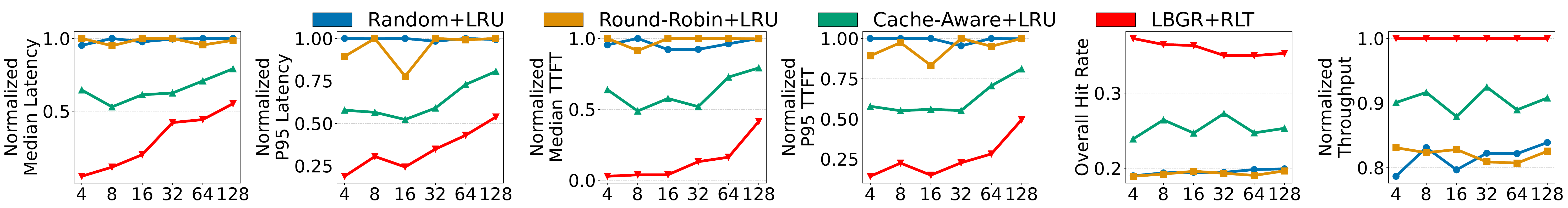}
     \caption{Results on Llama-3.1-8B-Instruct under GSP benchmark with varying maximum output token lengths.
     }
    \label{fig:output}
\end{figure}

\begin{figure}[ht]
    \centering
    \includegraphics[width=0.98\linewidth]{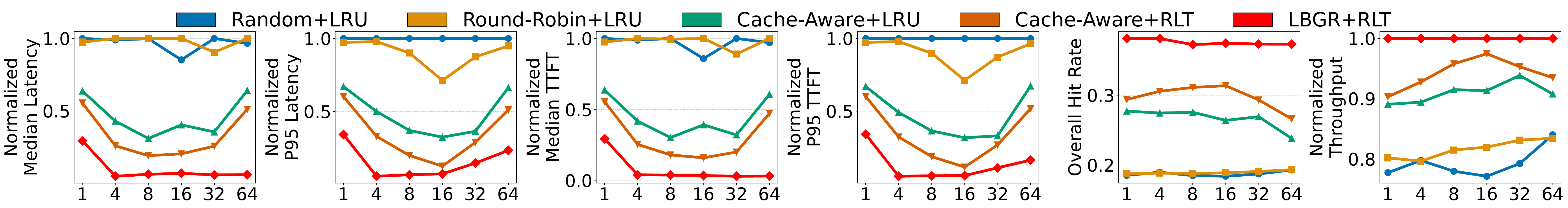}
     \caption{Results on Llama-3.1-8B-Instruct under GSP benchmark with varying limits on the maximum number of concurrent queries processed by each worker.
     }
    \label{fig:batch}
\end{figure}

\section{Additional Experimental Results}\label{app:results}
\noindent\textbf{Model Size \& Architecture.}
We evaluate the generalizability of our algorithms across model scales and architectures, using Llama-3.1-70B-Instruct as a representative large dense model and Mixtral-8$\times$B-Instruct-v0.1 as a sparse MoE model.
As shown in Figure~\ref{fig:70B} and Figure~\ref{fig:mixtral}, our methods consistently outperform all baselines across all benchmarks and evaluation metrics.
\tsf{LBGR+RLT} achieves the lowest average latency and TTFT, reducing median latency by \textbf{5.46$\times$} and TTFT by \textbf{7.19$\times$} compared to the best baseline on Llama-3.1-70B-Instruct, and reducing both median latency and TTFT by \textbf{24.2\%} on {Mixtral}.
It also achieves the highest average cache hit rate and throughput on both models.
These results demonstrate the robustness and adaptability of our algorithms in serving both large dense and sparse MoE models.

\noindent\textbf{Shared-Prefix Ratio \& Number of Shared-Prefix Queries.}
We evaluate the impact of two key factors related to prefix sharing on algorithm performance: the shared prefix ratio (i.e., the fraction of tokens shared across queries in a group, see Appendix~\ref{app:exp} for details), and the number of shared-prefix queries per group.
As shown in Figure~\ref{fig:prefix}, when varying the shared prefix ratio from 0.3 to 0.9 on the GSP benchmark, our method consistently achieves strong performance across all settings.
Even at low sharing (ratio = 0.3), our method outperforms all baselines by a clear margin, and at high sharing (ratio = 0.9), it continues to maintain the advantage.
We further vary the number of queries per shared-prefix group from 4 to 128. 
Figure~\ref{fig:queries} shows that \tsf{LBGR+RLT} consistently outperforms all baselines across almost all group sizes.
As the number of queries per group increases, the performance gap between \tsf{LBGR+RLT} and the baselines continues to widen.
These results demonstrate the strong robustness and adaptability of our algorithms under diverse prefix-sharing conditions.

\noindent\textbf{Number of Output Tokens.}
Figure~\ref{fig:output} presents the performance of our method, \tsf{LBGR+RLT}, under varying maximum generated output lengths, ranging from 4 to 128 tokens.
The results clearly show that \tsf{LBGR+RLT} outperforms all baselines across all metrics and output lengths, further demonstrating its strong performance under diverse output generation lengths.

\noindent\textbf{Maximum Concurrent Queries.}
We vary the limit on the maximum number of concurrent queries per worker from 1 to 64.
As shown in Figure~\ref{fig:batch}, \tsf{LBGR+RLT} significantly outperforms all baselines across all settings, demonstrating its strong performance under varying batch sizes.
Notably, \tsf{Cache-Aware+RLT} consistently outperforms \tsf{Cache-Aware+LRU} across all metrics and settings, which validates our theoretical analysis transferring in practice: Our eviction algorithm, \tsf{RLT}, achieves a significantly better competitive ratio compared to \tsf{L-LRU} in both the single-query and batch processing settings.

\begin{figure}[t!]
    \centering
    \includegraphics[width=0.98\linewidth]{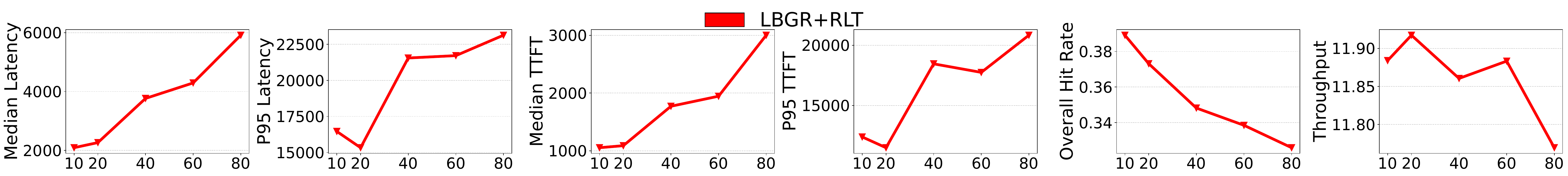}
     \caption{Ablation results on Llama-3.1-8B-Instruct under GSP benchmark with varying decay interval $\Delta t$ from 10ms to 80ms.
     }
    \label{fig:interval}
\end{figure}

\renewcommand{\arraystretch}{1.3}
\begin{table*}[t!]
\small
\caption{
Comparison of \tsf{RLT} and \tsf{L-LRU} on the single worker under round-robin query arrivals on the GSP benchmark.
}
\setlength{\tabcolsep}{1pt}
  \label{tab:single}
  \centering
  % \scalebox{0.78}
  {\begin{tabular}
    {|c|c|c|c|c|c|c|}
    \noalign{\global\arrayrulewidth1pt}\hline\noalign{\global\arrayrulewidth0.4pt}
   {\normalsize \textbf{Algorithm}}  
    & \textbf{\makecell{Normalized\\P50 Latency}} & \textbf{\makecell{Normalized\\P95 Latency}}  &\textbf{ \makecell{Normalized\\P50 TTFT}} & \textbf{\makecell{Normalized\\P95 TTFT}} & \textbf{Hit Rate }&\textbf{ \makecell{Normalized\\Throughput}}  \\
    \hline
    \tsf{L-LRU}& 
    1.0  &      1.0  &    1.0  &    1.0   &  6.06 \% &  
   0.62
    \\
    \hline
    \textbf{\tsf{RLT}} & 
   \textbf{0.55}  &         \textbf{0.53} &   \textbf{0.55}
 &    \textbf{0.54} &  \textbf{41.93\%} &  \textbf{1.0 }
    \\
    \noalign{\global\arrayrulewidth1pt}\hline\noalign{\global\arrayrulewidth0.4pt}
  \end{tabular}
}
\end{table*}

% \subsection{More Ablation Study Results}
\noindent\textbf{Performance of \tsf{RLT} on Single Worker.}
We evaluate the performance of \tsf{RLT} on a single worker under the worst-case query arrival order in the GSP benchmark, using 64 shared-prefix groups with 32 queries each.
Table~\ref{tab:single} reports the normalized comparison between \tsf{L-LRU} and \tsf{RLT}, showing that \tsf{RLT} reduces median latency and median TTFT by \textbf{45\%} and \textbf{47\%}, respectively.
It also improves tail performance, reducing P95 latency and P95 TTFT by \textbf{45\%} and \textbf{46\%}.
Notably, \tsf{RLT} significantly boosts cache hit rate, achieving a \textbf{6.92$\times$} higher hit rate and a \textbf{77.4\%} increase in throughput over \tsf{L-LRU}.

\noindent\textbf{Decay Interval $\Delta t$.}
To investigate the impact of the decay factor used in \tsf{LBGR}, we vary the decay interval $\Delta t$ from 10ms to 80ms.
As shown in Figure~\ref{fig:interval}, increasing $\Delta t$ leads to a drop in performance.
This result highlights the importance of the background decay thread: applying timely queue load decay enables a more accurate reflection of the current query load state for each worker.
Without decay, or under a weak decay setting (i.e., a large $\Delta t$), the estimated queue load becomes stale and less responsive to recent traffic, which degrades routing decisions and overall performance.

\section{Theoretical Proofs}\label{app:proofs}
\subsection{Competitive Ratio of \tsf{L-LRU}}

\lemmaA*
\begin{proof}
    We denote by $d$ the number of tokens in the cache of \tsf{OPT} that do not appear in the cache of \tsf{L-LRU} at the beginning of phase~$\P_v$. Similarly, let $e$ denote the number of tokens in the cache of \tsf{OPT} that do not appear in the cache of \tsf{L-LRU} at the end of phase~$\P_v$.

    By definition, there are $c$ clean tokens in $\P_v$, each of which is not present in the cache of \tsf{L-LRU} at the start of the phase. 
    Thus, each clean token must cause a cache miss for \tsf{L-LRU}.
    
    Now, consider the overlap between the caches of \tsf{OPT} and \tsf{L-LRU} at the start and end of $\P_v$.
 
    \textbf{At the start of $\P_v$.} 
    There are $d$ tokens in \tsf{OPT}'s cache that are not in \tsf{L-LRU}'s cache. 
    Therefore, at least $c - d$ of the clean tokens are also not present in \tsf{OPT}'s cache, which means \tsf{OPT} incurs at least $c-d$ misses during this phase.
       
    \textbf{At the end of $\P_v$.}
    There are $e$ tokens in \tsf{OPT}'s cache that are not in \tsf{L-LRU}'s cache. 
    Two possible types of tokens may remain in \tsf{L-LRU}’s cache: 
    (i) \emph{ghost} tokens that in the first path $\Gamma_j$ processed in $\P_v$ that were not accessed at all during $\P_v$ but remained in the cache, and (ii) new tokens accessed during $\P_v$.
    The occurrence of the first type is because the phase boundary may split the first path $\Gamma_j$ between $\P_{v-1}$ and $\P_v$.
    Due to the leaf-based eviction behavior of \tsf{L-LRU}, the prefix of $\Gamma_j$ (in phase $\P_{v-1}$) may still exist in the cache even if not accessed in $\P_v$, while its suffix (in $\P_v$) will be evicted first, as evictions target leaf tokens first.
    Since the entire path $\Gamma_j$ must exist in the cache for both \tsf{OPT} and \tsf{L-LRU} after being processed, we denote $a$ as the number of the prefix tokens of $\Gamma_j$, and $b$ as the number of its suffix tokens (new tokens).
    
    Note that the ghost tokens can only exist in the first path $\Gamma_j$ at the end of $\P_v$. 
    Suppose that $u$ ghost tokens from paths other than $\Gamma_j$ exist in the cache.
    Then \tsf{L-LRU} will only be able to hold $B_i - a - u$ new tokens, requiring space for $a + u$ more tokens.
    Since ghost tokens are least recently used, they will be evicted first to make room for the new tokens.
    As a result, \tsf{L-LRU} will evict $u$ such tokens from other paths and $a$ from $\Gamma_j$, leaving only $b$ of $\Gamma_j$ (either ghost or new tokens) in the cache.
    
    We then analyze the cache state of \tsf{OPT}. 
    The $e$ different tokens in \tsf{OPT}'s cache but not in \tsf{L-LRU}'s fall into three types:
    (i) $w$ tokens that never exist in the cache of \tsf{L-LRU} from the start to the end of $\P_v$;
    (2) $y$ new tokens from $\Gamma_j$ that were evicted by \tsf{L-LRU};
    and (3) $z$ ghost tokens from $\Gamma_j$ that were evicted by \tsf{L-LRU}.
    Thus, we have $e = w+y+z$. 
    Consider the following three cases:
    \begin{itemize}
        \item \textbf{Case1: When $y > 0$. }
        This indicates that \tsf{OPT} contains the full $a$ ghost tokens.
        Hence, \tsf{OPT} holds $w + a$ tokens that never appear during $\P_v$, leaving only $B_i - (w + a)$ slots to process $B_i$ new tokens.
        Therefore, it must occur at least $w + a$ misses.
        Since at most $a$ tokens from $\Gamma_j$ are evicted by \tsf{L-LRU}, we have $y + z \leq a$.
        Thus, \tsf{OPT} must occur at least $w + a \geq w+y+z \geq e$ misses.
        \item \textbf{Case 2: When $y = 0$ and $z >0$. }
        Since $z > 0$, it indicates that $a > b$ and there are only $b$ ghost tokens left in the cache of \tsf{L-LRU} (i.e.,  $b=a+b-a$), and therefore, \tsf{OPT} holds a total $b+z$ ghost tokens in the cache. 
        Therefore, it ocurrs at least $w+b+z \geq w+ y + z =w+z = e$ misses.
        \item \textbf{Case 3: When $y=0$ and $z=0$.}
        Then, we have $w = e$, and \tsf{OPT} occurs at least $e$ misses.
    \end{itemize}
     
    Combining the above, the number of misses incurred by \tsf{OPT} in phase~$\P_v$ is at least
    $\max\{c-d,\, e\} \geq \frac{1}{2}(c-d + e)$.
    Summing over all phases, the amortized number of misses incurred by \tsf{OPT} in a phase $\P_v$ is at least $c/2$.
    Finally, since \tsf{OPT} must incur at least one miss per phase (because a cache of size $B_i$ cannot hold all $B_i + 1$ different tokens), the amortized number of misses per phase is at least $\max\{c/2, 1\}$.
\end{proof}

\lemmaB*
\begin{proof}
    If $c \geq \cL$, then $B_i -\cL + c \geq B_i$. 
    Since misses only occur for new tokens and there are at most $B_i$ new tokens in a phase, the maximum number of misses for \tsf{L-LRU} in this case is $B_i$, which is bounded by $B_i - \cL + c$.
    
    If $c < \cL$, we analyze the maximum number of new tokens that can incur misses. 
    Consider the first path $\Gamma_j$ after processing the initial clean tokens $c_1$. 
    
    As shown in Figure~\ref{fig:tree} (a), if $\Gamma_j$ is completed during the phase, there are at least $\cL - c$ new tokens that must be processed that do not incur any misses. 
    This is because $\Gamma_j$ consists of three components: (i) tokens already stored in the cache, (ii) tokens that might be evicted while loading the initial clean tokens (at most $c_1$), and (iii) its own clean tokens ($c_2$). 
    The second and third parts (which will incur misses) together account for at most $c$ tokens, and the first part, at least $\cL - c$ tokens, will not result in additional misses.
    Then, the total number of misses of \tsf{L-LRU} is bounded by $B_i - \cL + c$.
    
    If $\Gamma_j$ is not completed during the phase, as shown in Figure~\ref{fig:tree} (b), the maximal number of misses for \tsf{L-LRU} is $y+ c_1 + c_2 = y + c$, where $y$ is the number of prefix tokens of $\Gamma_j$ that initially exist in the cache but are evicted when loading the initial clean tokens.
    Assume there are $w$ prefix tokens from the initial clean tokens, and $z$ other distinct tokens remaining in the cache that are on the same paths as the initial clean tokens and $\Gamma_j$.
    Then, we have
    \begin{equation}
    y \leq B_i - z - (w + c_1)
    \end{equation}
    where $c_1$ is the initial clean tokens.
    Therefore, the total number of misses can be bounded as
    \begin{equation}
    y + c \leq B_i - z - (w + c_1) + c \leq B_i - z + c - \cL \leq B_i + c - \cL
    \end{equation}
    where it follows from $z \geq 0$ and $w + c_1 \geq \cL$.
    
    Therefore, the maximal total number of misses of \tsf{L-LRU} is bounded by $B_i - \cL + c$ assuming that no old token reappears after being evicted.
\end{proof}

\begin{figure}[t]
    \centering
    \begin{subfigure}{0.32\linewidth}
        \centering
        \includegraphics[width=\linewidth]{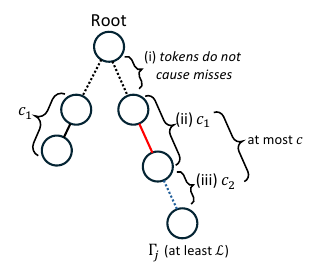}
        \caption{}
    \end{subfigure}
    \hfill
    \begin{subfigure}{0.32\linewidth}
        \centering
        \includegraphics[width=0.86\linewidth]{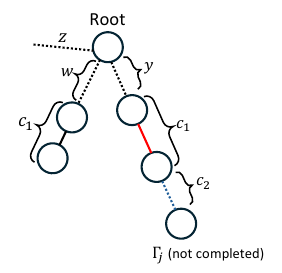}
        \caption{}
    \end{subfigure}
    \hfill
    \begin{subfigure}{0.32\linewidth}
        \centering
        \includegraphics[width=0.73\linewidth]{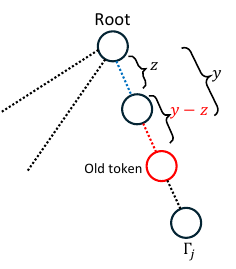}
        \caption{}
    \end{subfigure}
    \caption{Visualizing key ideas in theoretical proofs}
    \label{fig:tree}
\end{figure}

\lemmaC*
\begin{proof}
    Let us analyze the first occurrence of such a case for an old token.
    This token can be either a leaf or an internal token in its path $\Gamma_j$.
    As shown in Figure~\ref{fig:tree} (c), assume this old token has $y$ prefix tokens, and $z$ tokens have been accessed among these $y$ prefix tokens during this phase. 
    This indicates that at least  $B_i - y + z$ new tokens must have been accessed by the time the old token is evicted.
    To see this, note that if this token is being evicted, all other leaf tokens currently in the cache must have been accessed more recently. 
    Furthermore, because tokens within the same path are processed consecutively, the prefix tokens of these other leaf tokens are also more recently accessed. 
    
    Since $z$ of the old token’s prefix tokens have already been visited, at least $B_i - y + z$ new tokens have been accessed at the time the old token is evicted.
    When revisiting this old token, its $y$ prefix tokens must be accessed again, of which $z$ have already been processed. 
    Thus, there are $y - z$ new tokens to be assessed before the revisit. 
    In total, this means $B_i - y + z + y - z = B_i$ tokens are accessed before the old token is revisited.
    Therefore, the phase $\P_v$ ends here, and case (2) cannot occur.
\end{proof}

\thmA*
\begin{proof}
    For the first phase $\P_1$, since the cache is initially empty, \tsf{OPT} behaves the same as \tsf{L-LRU}.
    For any subsequent phase $\P_v$ ($v > 1$), we analyze the following two bounds:
    
    \textbf{Upper Bound.} 
    First, note that \tsf{OPT} incurs at least $\max\{c/2, 1\}$ misses in any phase $\P_v$ if there are $c$ clean tokens in that phase, according to Lemma~\ref{lemma:1}. 
    Furthermore, by the definition of a phase, where each phase consists of $B_i$ distinct tokens, the start of each phase is always associated with a path that ends with the clean tokens.
    
    Next, let us analyze the maximum number of misses that \tsf{L-LRU} can incur for a given phase $\P_v$ with $c$ clean tokens. 
    A miss can occur in one of two cases: (1) a new token appears, or (2) an old token appears after being evicted. 

    According to Lemma~\ref{lemma:3}, case (2) cannot occur during a phase, so
    according to Lemma~\ref{lemma:2}, the total number of misses of \tsf{L-LRU} is bounded by $B_i - \cL + c$.

    Therefore, the upper bound of the competitive ratio of \tsf{L-LRU} is bounded by 
    \begin{equation}
        \max_c \left\{\frac{B_i - \cL +c}{\max\{c/2, 1\}}\right\} = B_i - \cL + 2
    \end{equation}
    
    \textbf{Lower Bound.}
    We construct a set of paths that share the same $\cL-1$ prefix but with only one leaf token different. 
    We then repeatedly query $\Gamma_1, \Gamma_2, \dots, \Gamma_{B_i - \cL+2}$ in a loop. 
    In this scenario, \tsf{OPT} can only incur 1 miss, while \tsf{L-LRU} incurs $B_i - \cL +1$ misses in each phase.
    Therefore, the lower bound for the competitive ratio of \tsf{L-LRU} is $B_i - \cL +1$. 
\end{proof}

Based on the proof of single-query processing, we introduce the proof of batch processing.
\thmB*
\begin{proof}
    We assume that $\beta \cL_{max} \leq B_i$, which ensures that the cache can always hold $\beta$ queries simultaneously at any point during generation.  
    Based on this, there will be no abnormal evictions, such as evicting the KV values of tokens that belong to queries currently being generated.
    We now consider the following two bounds.

    \textbf{Upper Bound.}
    According to~\Cref{thm:1}, consider the scenario where case (2) does not occur.
    Let us examine the first complete batch $S_j$ that follows the initial clean tokens within any phase $\P_v$.
    Following the same analysis, if $c \geq \cL + \beta - 1$, we have $B_i - \cL - \beta + c + 1 \geq B_i$, so the total number of misses is bounded by $B_i - \cL - \beta + c + 1$.
    If $c < \cL + \beta - 1$, we analyze the maximum number of new tokens that can incur misses.
    If $S_j$ is completed during the phase, there are at least $\cL + \beta - 1 - c$ new tokens that must be processed but do not incur any misses. 
    This is because the tokens in $S_j$ can be categorized into three components:
    (i) tokens already present in the cache,
    (ii) tokens that may be evicted when loading the initial clean tokens and will incur misses when revisited, and
    (iii) its own clean tokens that incur misses.
    The first part, which does not contribute to additional misses, contains at least $\cL + \beta - 1 - c$ tokens. This follows from the fact that the minimal number of distinct tokens in a batch is $\cL + \beta - 1$.
    Therefore, the total number of misses of \tsf{L-LRU} is bounded by $B_i - \cL - \beta + 1 + c$.

    If $S_j$ is not completed during the phase, that is, if any queries in $S_j$ are not finished, then the batch is considered unfinished. 
    In this case, the maximal number of misses can be represented as $y + c$, where $y$ is the number of prefix tokens of $S_j$ that initially reside in the cache but are evicted when loading the initial clean tokens.
    This is because, in the continuous batch setting, new queries can be regarded as incurring misses and as a continuation of those that have already been completed. 
    Let $w$ be the number of prefix tokens of the initial clean tokens, and let $z$ denote the number of other distinct tokens remaining in the cache that are on the same paths as the initial clean tokens and $S_j$.
    Thus, the total number of misses is bounded as follows:
    \begin{equation}
    y + c \leq B_i - z - (w + c_1) + c \leq B_i - z + c - \cL  + 1 -\beta \leq B_i + c - \cL  + 1 - \beta
    \end{equation}
    where the inequalities follow from $z \geq 0$ and $w + c_1 \geq \cL - 1 + \beta$.

    Therefore, when case (2) does not occur, the total number of misses of \tsf{L-LRU} is bounded by $B_i - \cL - \beta + 1 + c$.

    Now, consider the first time situation (2) occurs, where a miss is caused by revisiting an old token that has been evicted. 
    It is possible that there are multiple old tokens involved, with at most $\beta$ different old tokens evicted and revisited simultaneously. 
    Let us focus on the oldest among them, i.e., the old token that was visited earliest during the phase.
    Applying the same analysis as in~\Cref{thm:1} to this oldest token, we conclude that, upon revisiting, at least $B_i$ new tokens must have been accessed since it was evicted. 
    Therefore, the phase $\P_v$ ends here, and case (2) cannot occur again within the same phase.
    
    For \tsf{OPT}, it incurs at least $\max\{c/2, 1\}$ misses, according to Lemma~\ref{lemma:1}. 
    Thus, the competitive ratio of \tsf{L-LRU} is upper-bounded by
    \begin{equation}
        \max_c \left\{\frac{B_i - \cL - \beta + c + 1}{\max\{c/2, 1\}}\right\} = B_i -\cL - \beta + 3
    \end{equation}

    \textbf{Lower Bound.}
    We construct $B_i - \cL + 2$ paths that share the same $\cL-1$ tokens prefix but with only one leaf token different, $\{\Gamma_1, \Gamma_2, \dots, \Gamma_{B_i - \cL+2}\}$.
    Furthermore, we construct $B_i-\cL  - \beta + 3$ batches, where $u$-th batch contains paths $\{\Gamma_1, \Gamma_{2}, \dots, \Gamma_{\beta-1}, \Gamma_{u + \beta - 1} \}$.
    We can do the following querying: $1, 2, \dots, B_i-\cL  - \beta + 3$ batch in a loop. 
    Therefore, for each phase, \tsf{OPT} can only incur 1 miss, whereas the \tsf{L-LRU} incurs $B_i-\cL - \beta + 2$ misses per phase.
    Therefore, the lower bound for the competitive ratio of \tsf{L-LRU} is $B_i-\cL - \beta + 2$. 
\end{proof}

{
\noindent\textbf{Discussion of the assumption $\beta \cL_{max} \le B_i$.}
We assume that $\beta \cL_{max} \leq B_i$, which ensures that the cache can always hold $\beta$ queries simultaneously at any point during running and thus prevents abnormal evictions (e.g., evicting KV entries for queries that are still being generated). 
This condition models the capacity constraint in practical LLM systems: GPU memory is limited, so system operators typically choose a maximum batch size $\beta$ and context length $\cL_{max}$ such that their product does not exceed the available KV cache capacity $B_i$.
If instead $\beta \cL_{max} > B_i$ holds persistently, any policy that forms such batches would eventually be forced to evict KV entries belonging to in-progress queries, leading to degenerate cache behavior and a high risk of out-of-memory failures.
Therefore, our analysis focuses on the well-provisioned regime where $\beta \cL_{max} \leq B_i$, which accurately reflects real-world configurations and is precisely the setting where the choice of eviction policy has a meaningful impact.}

{
We assume that queries in a batch are distinct to obtain a meaningful worst-case analysis for the batched setting.
Without any diversity constraint, one can always construct worst-case batches by repeating a single query path $\beta$ times
In this case, the behavior reduces to the single-query setting and the upper bound of competitive ratio becomes $B_i - \mathcal{L}+2$. 
In~\Cref{thm:2}, it considers all queries in the batch are different, which ensures the batch must contain at least $L + \beta - 1$ distinct tokens. 
This exactly leads to the $-\beta$ term in the final upper bound $B_i - \mathcal{L} - \beta + 3$. 
More generally, one can consider that each batch contains at least $d$ distinct queries, then the same analysis gives an upper bound of $B_i - \mathcal{L} - d + 3$ with $d \leq \beta$, which is weaker than the bound in~\Cref{thm:2}. 
In deployed LLM services, batches are formed from requests arriving within a short time window, typically from different users, so batch contents are often diverse and can be reasonably approximated as distinct. 
Furthermore, due to the stochastic nature of LLM generation, even identical inputs may lead to different output paths, further increasing diversity within a batch. 
Thus, we focus on the representative high-diversity case $d = \beta$, where all queries in a batch are distinct.
}

\subsection{Competitive Ratio of \tsf{RLT}}
\thmC*

\begin{proof}
    First, by Lemma~\ref{lemma:1}, any algorithm, including \tsf{OPT}, incurs an amortized number of misses of at least $\max\{c/2, 1\}$ per phase.
    We now bound the expected number of misses incurred by \tsf{RLT} in a given phase.
    Following the analysis in~\citep{fiat1991competitive}, we partition the $B_i$ new tokens into two groups: (1) clean tokens, which are not present in \tsf{RLT}’s cache at the start of phase $\P_v$, and (2) stale tokens, which are present in the cache at the beginning of $\P_v$ but may be evicted during the phase.

    In this setting, a miss can occur either when accessing a clean token or when accessing a stale token that has been evicted earlier in the phase.
    To maximize the number of misses, an adversarial request sequence first accesses all clean tokens, causing the eviction of some stale tokens.
    Subsequent requests to these evicted stale tokens could then incur the maximal number of additional misses.
    This is because accessing all clean tokens first increases the chance that the stale tokens will be evicted, therefore increasing the number of misses.

    The number of misses incurred by clean tokens is clearly $c$.
    For each stale token, the expected cost of a miss equals the probability that it has been evicted by the time it is accessed.
    This probability is maximized at $c/s$, where $c$ is the number of clean tokens (i.e., the maximum number of evicted stale tokens at any moment during the phase), and $s$ is the number of stale tokens that have not yet been accessed at that point.

    According to~\Cref{thm:1}, the total number of misses in a phase is upper bounded by $\min\{B_i - \cL + c, B_i\}$.
    Therefore, the number of stale tokens that may incur a miss in \tsf{RLT} is at most $n = \min\{B_i -\cL + c, B_i\}$.
    Hence, the total expected number of misses caused by stale tokens is upper bounded by:
    \begin{equation}
        \begin{aligned}
            \sum_{u=0}^{n - c -1} \frac{c}{n - u} = c(H_{n} - H_{c})
        \end{aligned}
    \end{equation}
    where $H$ represents the Harmonic number.
    
    Then, the expected number of misses incurred by \tsf{RLT} in a phase is upper bounded by $c + c(H_{n} - H_{c})$.
    Accordingly, the competitive ratio of \tsf{RLT} is bounded by:
    \begin{equation}\label{eq:10}
        \max_c \left\{\frac{ c + c(H_{n} - H_{c})}{ \max\{c/2, 1\}} \right\}
    \end{equation}
    
    Assuming $\cL \geq 2$, which holds in most practical LLM-serving scenarios, the above expression yields a competitive ratio of:
    \begin{equation}
        \Theta\left(\log (B_i - \cL + 2)\right) = \Theta\left(\log (B_i - \cL)\right)
    \end{equation}
    
    Therefore, \tsf{RLT} is $\Theta(\log(B_i - \cL))$-competitive on worker $i$ with cache capacity $B_i$ under single-query processing.
\end{proof}

\thmD*
\begin{proof}
    Following the same analysis in~\Cref{thm:3}, we have that the total number of misses in a phase is upper bounded by $\min\{B_i - \cL  - \beta + c + 1, B_i\}$.
    Therefore, $n = \min\{B_i -\cL - \beta + c + 1, B_i\}$ gives the maximum number of stale tokens that may result in misses in \tsf{RLT}.
    Accordingly, the total expected number of misses caused by stale tokens is also upper-bounded by:
    \begin{equation}
        \begin{aligned}
            \sum_{u=0}^{n - c -1} \frac{c}{n - u} = c(H_{n} - H_{c})
        \end{aligned}
    \end{equation}
    
    Then, the competitive ratio of \tsf{RLT} is similarly bounded by:
    \begin{equation}\label{eq:11}
        \max_c \left\{\frac{ c + c(H_{n} - H_{c})}{ \max\{c/2, 1\}} \right\}
    \end{equation}
    
    With $\cL \geq 2$, we have the following competitive ratio:
    \begin{equation}
        \Theta\left(\log (B_i - \cL -\beta + 3)\right) = \Theta\left(\log (B_i - \cL -\beta )\right)
    \end{equation}
    
    Therefore, \tsf{RLT} is $\Theta(\log(B_i - \cL -\beta))$-competitive on worker $i$ with cache capacity $B_i$ under continuous batching generation.
\end{proof}

% \subsection{Performance of PagedAttention}

Next, we show that no dependent algorithm can achieve a competitive ratio better than $\Theta(\log(B_i - \cL))$ in the single-query processing setting, and $\Theta(\log(B_i - \cL - \beta))$ in the continuous batching setting.

\thmE*
\begin{proof}
    To establish the lower bound, we apply Yao’s principle~\citep{yao1977probabilistic}.
    Specifically, we first show that there exists a distribution of input query paths $\{x_1, x_2, \dots, x_{n}\}$ such that any \emph{deterministic} eviction algorithm $D$ incurs a competitive ratio of at least $H_{B_i - \cL + 2}$. 

    The construction is as follows: each path shares a common prefix of length $\cL - 1$ but differs in the last tail token. 
    We sample each query path $\Gamma_j$ uniformly at random from the index set $j \in [B_i - \cL + 2]$, where $[B_i - \cL + 2] = \{1, \dots, B_i -\cL +2\}$. 
    Under this distribution, the expected number of cache misses incurred by any $D$ is:
    \begin{equation}
        \E[{Miss}_{{D}}] = B_i + \frac{n- (B_i - \cL + 1)}{B_i - \cL + 2} \geq \frac{n}{B_i - \cL + 2} 
    \end{equation}
    where the first $B_i$ terms account for the initial misses for $B_i$ tokens, incurred by the $(B_i - \cL + 1)$ distinct query paths, the second term corresponds to subsequent misses, where each of the remaining $n - (B_i - \cL + 1)$ query paths has a probability $\frac{1}{B_i - \cL + 2}$ of incurring a miss.

    Next, by Lemma~\ref{lemma:1}, we have a lower bound on the number of misses incurred by \tsf{OPT}: 
    \begin{equation}
        {Miss}_{\tsf{OPT}} \geq V(\max \{c/2, 1\}) \geq V
    \end{equation}
    where $V$ is the number of phases in the query sequence. 
    When $V$ is a random variable, it follows that $\E[{Miss}_{\tsf{OPT}}] \geq \E[V]$.

    Since \tsf{OPT} always evicts the token whose next use is furthest in the future, it will evict the tail token furthest in the future.
    Suppose such a token is evicted at time $w$ and reappears at time $y$. 
    Then, there will be no cache misses between $w$ and $y$.
    This is because there are only $B_i - \cL + 2$ distinct query paths, and the cache can hold at most $B_i - \cL + 1$ of them, which means only the evicted path is temporarily excluded.
    Furthermore, any other tail token must appear at least once between $w$ and $y$; otherwise, it would have been a better candidate for eviction due to its more distant reuse, which contradicts the eviction rule of \tsf{OPT}.
    Thus, the expected number of queries between two consecutive misses (i.e., between $w$ and $y$) follows the structure of the classical coupon collector's problem:
    \begin{equation}
        \E[y-w] = (B_i -\cL +2)\sum_{u=1}^{B_i -\cL +2} \frac{1}{u} = (B_i -\cL +2) H_{B_i -\cL +2}
    \end{equation}
    which leads to 
    \begin{equation}
        \E[{Miss}_{\tsf{OPT}}] = \frac{n}{(B_i -\cL +2) H_{B_i -\cL +2}}
    \end{equation}
    Therefore, the competitive ratio of any deterministic algorithm $D$ is lower bounded by:
    \begin{equation}
        \frac{\E[{Miss}_{{D}}]}{\E[{Miss}_{\tsf{OPT}}]} = H_{B_i -\cL +2}
    \end{equation}

    Finally, we leverage Yao's principle to establish a lower bound for the competitive ratio of any randomized algorithm.
    Let $X^n$ denote the random variable representing the input query paths $\{x_j\}_{j=1}^{n}$. 
    Then, we have
    \begin{equation}
        \E_{X^n}[{Miss}_{{D}}(X^n)] \geq H_{B_i -\cL +2} \E_{X^n}[{Miss}_{\tsf{OPT}}(X^n)]
    \end{equation}
    We assume $\D$ is the distribution over the deterministic algorithms. 
    Taking the expectation over $D \sim \D$, we have 
    \begin{equation}
        \E_{X^n}\E_{\D}[{Miss}_{{\D}}(X^n)] \geq H_{B_i -\cL +2} \E_{X^n}[{Miss}_{\tsf{OPT}}(X^n)]
    \end{equation}
    By the definition of expectation, it indicates that there exists a specific sequence of input paths $\{x_j^*\}_{j=1}^{n}$, such that
    \begin{equation}
        \E_{\D}[{Miss}_{{\D}}(\{x_j^*\})] \geq H_{B_i -\cL +2} {Miss}_{\tsf{OPT}}(\{x_j^*\})
    \end{equation}

    Therefore, for any randomized eviction algorithm, its competitive ratio under the single-query processing setting is lower bounded by:
    \begin{equation}
        H_{B_i -\cL +2} = \Theta\left(\log (B_i - \cL)\right)
    \end{equation}
\end{proof}

\thmF*
\begin{proof}
    Following the analysis in~\Cref{thm:5}, we first construct $B_i - \cL + 2$ distinct paths $\{\Gamma_1, \Gamma_2, \dots, \Gamma_{B_i - \cL+2}\}$, each sharing the same $\cL-1$ tokens prefix but differing in their final (tail) token.
    Based on these paths, we define $B_i-\cL  - \beta + 3$ distinct batches, where the $u$-th batch consists of the paths $\{\Gamma_1, \Gamma_{2}, \dots, \Gamma_{\beta - 1}, \Gamma_{u + \beta - 1} \}$.
    Now consider the given query batch sequence, $\{x_1, x_2, \dots, x_n\}$, where each $x_i$ is a batch picked uniformly at random from the total $B_i-\cL  - \beta + 3$ batches.
    Then, the expected number of misses incurred by any deterministic algorithm $D$ is then lower bounded by:
    \begin{equation}
        \E[{Miss}_{D}] = B_i + \frac{n-(B_i-\cL  - \beta + 2)}{B_i-\cL  - \beta + 3} \geq \frac{n}{B_i-\cL  - \beta + 3}
    \end{equation}
    where this follows from the fact that at least $B_i - \cL - \beta + 2$ distinct batches are required to fill an initially empty cache.

    According to Lemma~\ref{lemma:1} and \Cref{thm:5}, the expected number of misses incurred by \tsf{OPT} satisfies
    \begin{equation}
        E[{Miss}_{\tsf{OPT}}] \geq \E[V(\max \{c/2, 1\})] \geq \E[V]
    \end{equation}
    where $V$ is the number of phases for the given queries.

    Next, note that under the given query batches, \tsf{OPT} only evicts one token at a time. 
    This is because each batch contains exactly one unique tail token from the last path $\Gamma_{u+\beta-1}$, and this setup mirrors the single-query setting analyzed in \Cref{thm:5}.
    Therefore, by applying the same analysis, the expected number of tokens between any two misses (say, between $w$ and $y$) follows:
    \begin{equation}
        \E[y-w] = (B_i-\cL  - \beta + 3)\sum_{u=1}^{B_i-\cL  - \beta + 3} \frac{1}{u} = (B_i-\cL  - \beta + 3) H_{B_i-\cL  - \beta + 3}
    \end{equation}
    which leads to 
    \begin{equation}
        \E[{Miss}_{\tsf{OPT}}] = \frac{n}{(B_i-\cL  - \beta + 3) H_{B_i-\cL  - \beta + 3}}
    \end{equation}
    Therefore, the competitive ratio of any deterministic eviction algorithm $D$ in the continuous batching setting is lower bounded by:
    \begin{equation}
        \frac{\E[{Miss}_{{D}}]}{\E[{Miss}_{\tsf{OPT}}]} = H_{B_i-\cL  - \beta + 3}
    \end{equation}

    Finally, by Yao’s principle, we conclude that for any randomized eviction algorithm under the continuous batching setting, its competitive ratio is at least
    \begin{equation}
        H_{B_i-\cL  - \beta + 3} = \Theta\left(\log (B_i - \cL - \beta)\right)
    \end{equation}
\end{proof}

\section{Extended Related Work}\label{app:related}
\noindent\textbf{KV Cache Optimization.}
The computational complexity of Large Language Models (LLMs) in token generation scales quadratically with sequence length~\citep{jaillet2025online, ainslie2023gqa, vaswani2017attention}. 
KV caching is a fundamental optimization that mitigates this complexity by storing the computed KV pairs for past tokens and reusing them for future queries~\citep{kwon2023efficient, lee2024infinigen, sheng2024sloraservingthousandsconcurrent}. 
However, the limited memory capacity of KV caches remains a bottleneck for long-context generation, where storing the entire historical token context is impractical~\citep{zhang2023h2o, xiao2023efficient}. 
Researchers, therefore, have explored memory reduction through quantization techniques that compress cached KV values~\citep{lin2016fixed, wu2020integer, zhou2018adaptive, jiang2018linear}.
This involves converting the full-precision values (e.g., FP16) stored in the cache to lower-precision integer formats (e.g., INT8)~\citep{yao2022zeroquant, sheng2023flexgen, hooper2024kvquant}.
However, outlier KV values often lead to significant performance degradation during quantization~\citep{dettmers2022gpt3, xiao2024smoothquantaccurateefficientposttraining}.
This has motivated specialized techniques such as SmoothQuant~\citep{xiao2024smoothquantaccurateefficientposttraining} and KVQuant~\citep{hooper2024kvquant}, which smooth or isolate outliers to improve quantization robustness.

% constant kv caching 
\noindent\textbf{KV Cache Management.}
Recent work on KV cache management falls into two complementary strands.
Context-aware methods leverage model-driven signals (e.g., attention weights or token importance) to score tokens, retaining only a fixed-budget subset and evicting the rest~\citep{zhang2023h2o, xiao2023efficient, liu2023scissorhandsexploitingpersistenceimportance}.
Representative examples include H2O, which preserves Heavy-Hitters with high attention scores~\citep{zhang2023h2o}, and StreamingLLM, which retains initial attention sink tokens that anchor model stability~\citep{zhang2023h2o, xiao2023efficient}. 
While effective at reducing memory, these strategies risk performance degradation on tasks requiring high-fidelity recall, since pruning may inadvertently discard critical long-range context.
% sys-level
Another strand is system-oriented and context-agnostic, focusing on improving KV-cache utilization through redesigned memory layouts and management interfaces~\citep{kwon2023efficient, zheng2024sglangefficientexecutionstructured}.
vLLM's PagedAttention virtualizes the cache into fixed-size pages to reduce fragmentation and support efficient sharing under tight GPU memory~\citep{kwon2023efficient}, while RadixAttention uses a radix tree over token prefixes to enable prefix reuse and fine-grained allocation/eviction for high-throughput serving~\citep{zheng2024sglangefficientexecutionstructured}.
However, those systems rely primarily on LRU-based policy for KV cache eviction, which is fragile under adversarial or bursty query patterns and can yield negligible hit-rate improvements in the worst case. 
This reliance, coupled with the lack of formal analysis for ordered, prefix-sharing KV cache structures, reveals a gap between practical designs and theoretical understanding in KV cache eviction.

% current kv cache routing algorithms  
\noindent\textbf{KV Cache-Aware Load Balancing.}
In multi-LLM serving, balancing queue load while preserving KV reuse gives rise to the problem of KV cache–aware load balancing~\citep{sun2024llumnix, zheng2024sglangefficientexecutionstructured, lee2024infinigen}.
Most systems tackle this with heuristics that trade off cache affinity against query load~\citep{dynamo, zheng2024sglangefficientexecutionstructured, llmd}.
SGLang, for example, employs a rule-based strategy that switches between highest-hit-rate routing and least-loaded routing based on a predefined load-balance threshold~\citep{zheng2024sglangefficientexecutionstructured}.
Other systems instead adopt a static linear scoring function that combines prefix-match benefits with current load to guide routing~\citep{dynamo, llmd}.
SkyLB focuses on decentralized deployments, employing per-region coordinators and a multi-region prefix trie to preserve KV locality across regions~\citep{xia2025skylblocalityawarecrossregionload}.
While practical, these methods remain largely heuristic and lack formal modeling for the underlying KV cache-aware load balancing problem, leaving them vulnerable to suboptimal performance under dynamic query patterns.

\section{Discussion on Global Radix Tree}\label{app:tree}
In our implementation of \tsf{LBGR}, we directly use the existing global radix tree in SGLang to estimate the cache hit rate for each worker for an incoming query. 
This global radix tree caches queries at the character level, avoiding tokenization and significantly reducing runtime overhead.
It maintains per-worker cache state, where each worker corresponds to a subtree consisting of multiple root-to-leaf character paths, and each path may be associated with multiple workers simultaneously. 
For a new query, it performs a longest-prefix match against the subtree of each worker and estimates the hit rate based on the character-level match.
This estimate may deviate from the true cache hit rate based on the worker's token-level cache due to two factors: (i) a mismatch between character-level and token-level granularity, and (ii) staleness or inconsistency in the shared tree caused by concurrent updates.
The same global radix tree is also used across all baselines in our experiments, including the state-of-the-art method.
A detailed investigation of this structure and its associated biases is beyond the scope of this work and is left for future research.

\section{LLM Usage}
We used LLMs only for language polishing and grammar correction.
All technical content, theoretical analysis, algorithm design, and experimental results were developed independently by the authors.

\end{document}